\theoremstyle{definition}
\newtheorem{definition}{Definition}[section]
\newtheorem{theorem}{Theorem}[section]
\newtheorem{corollary}{Corollary}[theorem]
\newtheorem{lemma}[theorem]{Lemma}
\newcommand{\HL}[1]{{\color{red} #1}}
\newcommand{\RV}[1]{{\color{black} #1}}
\newcommand\reallywidehat[1]{\arraycolsep=0pt\relax%
\begin{array}{c}
\stretchto{
  \scaleto{
    \scalerel*[\widthof{\ensuremath{#1}}]{\kern-.5pt\bigwedge\kern-.5pt}
    {\rule[-\textheight/2]{1ex}{\textheight}} %WIDTH-LIMITED BIG WEDGE
  }{\textheight} % 
}{0.5ex}\\           % THIS SQUEEZES THE WEDGE TO 0.5ex HEIGHT
#1\\                 % THIS STACKS THE WEDGE ATOP THE ARGUMENT
\rule{-1ex}{0ex}
\end{array}
}
\title{Wasserstein projection distance for fairness testing of regression models}
\author{
 Wanxin Li \\
  Department of Computer Science, University of British Columbia\\
  Vancouver, British Columbia, Canada \\
  \texttt{wanxinli@cs.ubc.ca} \\
     \And
 Yongjin P. Park \\
 Department of Pathology and Laboratory Medicine, University of British Columbia\\
 Department of Statistics, University of British Columbia\\
  BC Cancer Research, Part of Provincial Health Care Authority\\
  Vancouver, British Columbia, Canada \\
  \texttt{ypp@stat.ubc.ca} 
   \And
 Khanh Dao Duc\thanks{Corresponding author.} \\
 Department of Mathematics, University of British Columbia\\
 Department of Computer Science, University of British Columbia\\
  Vancouver, British Columbia, Canada \\
  \texttt{kdd@math.ubc.ca} \\
}
\begin{document}
\maketitle
\begin{abstract}
Fairness testing evaluates whether a model satisfies a specified fairness criterion across different groups, yet most research has focused on classification models, leaving regression models underexplored. This paper introduces a framework for fairness testing in regression models, leveraging Wasserstein distance to project data distribution and focusing on expectation-based criteria. Upon categorizing fairness criteria for regression, we derive a  Wasserstein projection test statistic from dual reformulation, and derive asymptotic bounds and limiting distributions, allowing us to formulate both a hypothesis-testing procedure and an optimal data perturbation method to improve fairness while balancing accuracy. Experiments on synthetic data demonstrate that the proposed hypothesis-testing approach offers higher specificity compared to permutation-based tests. To illustrate its potential applications, we apply our framework to two case studies on real data, showing (1) statistically significant gender disparities that appear on student performance data across multiple models, and (2) significant unfairness between pollution areas under multiple fairness criteria affecting housing price data, robust to different group divisions, with feature-level analysis identifying spatial and socioeconomic drivers.
\end{abstract}

% keywords can be removed
%\keywords{First keyword \and Second keyword \and More}

\section{Introduction}
% todos:
% what is fairness, and why is it important
% what is fairness testing, and why is it important 
% what is statistical fairness testing, and why is it useful
% mention statistical fairness testing is underexplored in regression
\RV{Fairness in machine learning models has become a critical concern as predictive models are increasingly deployed in high-stakes domains such as hiring~\citep{smelyakov2023analysis}, lending~\cite{durojaiye2024designing}, and healthcare~\citep{alanazi2022using}. 
Fairness testing formally evaluates whether a model satisfies a specified fairness criterion, providing guarantees about equitable performance across different groups~\cite{chen2024fairness}. One way to conduct fairness testing is statistical testing, which leverages hypothesis testing frameworks to quantify whether observed disparities in model predictions are statistically significant rather than due to random variation. While statistical fairness testing has been extensively studied in classification tasks~\citep{taskesen2021statistical,si2021testing,chen2025fairness}, it remains underexplored for regression models~\citep{chen2024fairness}. This gap is significant because regression models produce continuous outputs where unfairness presents differently than in binary predictions, requiring specialized statistical frameworks for testing.}

In this work, we categorize fairness criteria in regression into several classes, with a focus on expectation-based fairness criteria, which require that model outputs (or errors) have equal expected values across different population groups. Within this context, we build on the framework proposed in~\cite{taskesen2021statistical}, apply this distance to fairness testing in regression models, and extend the theoretical foundations to the regression setting. We explore the utility of this distance in two main applications: (1) fairness testing, where we develop a hypothesis-testing framework to test the fairness of regression models, and (2) optimal data perturbation, a natural extension of our testing framework, where we propose a procedure to adjust data toward greater fairness, with some trade-off in accuracy. We perform synthetic experiments to benchmark Wasserstein projection-based fairness testing against permutation-based fairness testing. We further conduct two case studies: (1) We apply our testing framework to assess fairness on Math and Portuguese grades with respect to gender on a dataset that describes student performance in Math and Portuguese grades with respect to gender; (2) We use our framework to test \RV{fairness} between low and high-pollution areas on a dataset that contains housing and environmental features including pollution levels, \RV{examine whether the fairness decisions are robust to different percentile-based group divisions}, followed by a feature-level analysis using the optimal data perturbation method.

% to identify fair models and select the most accurate among them, followed by applying fairness correction to enhance its fairness further. Our results suggest that Wasserstein projection distance is a powerful and versatile tool for promoting fairness in regression models across various tasks.

\section{Background and related works}
\subsection{Fairness criteria for regression models}
\label{sec:fairness-reg}

% In this section, we present examples of fairness criteria for regression. 

We introduce the following notations. Let $\mathcal{R}: \mathcal{X} \rightarrow \mathcal{Y}$ be a regressor, where $\mathcal{X}$ is the feature space and $\mathcal{Y}$ is the label space. Let $A \in \mathcal{A}$ denote a binary sensitive attribute (e.g., gender), where $\mathcal{A}$ represents the sensitive attribute space. Let $X \in \mathcal{X}$ denote the features. Let $Y \in \mathcal{Y}$ denote a numerical label. Let $\mathbb{P} \in \mathcal{P}$ denote the joint true population distribution governing $(X,A,Y)$ where $\mathcal{P}$ represents the space of all distributions on \RV{$(\mathcal{X},\mathcal{A},\mathcal{Y})$}.

\begin{table*}[t]

\small % reduce font size for the table
\centering
\caption{Fairness criteria for regression models}
\small
\label{tab:fairness_criteria_regression}
\begin{tabular}{>{\raggedright\arraybackslash}p{2.5cm} >{\raggedright\arraybackslash}p{4.5cm} >{\raggedright\arraybackslash}p{3cm} >{\raggedright\arraybackslash}p{5cm}}
\toprule
\textbf{Criterion Name} & \textbf{Expression} & \textbf{Reference} & \textbf{Use Case} \\
\midrule
Statistical Parity & $\mathbb{P}(\mathcal{R}(X) \mid A{=}1) \overset{d.}{=} \mathbb{P}(\mathcal{R}(X) \mid A{=}0)$ & \cite{fitzsimons2019general,chzhen2022minimax,agarwal2019fair} & Ensures predictions are independent of sensitive attribute. \\
\addlinespace
Error Parity & $\mathbb{P}(E(Y, \hat{Y}) \mid A{=}0) \overset{d.}{=} \mathbb{P}(E(Y, \hat{Y}) \mid A{=}1)$ & \cite{gursoy2022error} & Ensures prediction errors are similarly distributed across groups. \\
\addlinespace
Equal Mean & $\mathbb{E}[\mathcal{R}(X) \mid A{=}0] = \mathbb{E}[\mathcal{R}(X) \mid A{=}1]$ & \cite{fitzsimons2019general} & Equality of average predictions for fairness. \\
\addlinespace
Accuracy Parity & $\mathbb{E}[E(Y, \mathcal{R}(X)) \mid A{=}1] = \mathbb{E}[E(Y, \mathcal{R}(X)) \mid A{=}0]$ & \cite{chi2021understanding} & Equal predictive accuracy across groups. \\
\addlinespace
Bounded Group Loss & $\forall a, \mathbb{E}[l(Y, \mathcal{R}(X)) \mid A{=}a] \leq \epsilon_a$ & \cite{agarwal2019fair} & Model loss must stay under a threshold for each group. \\
\bottomrule
\label{tab:fairness-criterion}
\end{tabular}
\end{table*}

\Cref{tab:fairness_criteria_regression} summarizes commonly adopted fairness criteria for regression. These criteria vary in terms of their mathematical formulation and the fairness objectives they aim to achieve. Some criteria focus on aligning model outputs across groups (e.g., Statistical Parity, Equal Mean), while others emphasize equality in prediction quality or error (e.g., Error Parity, Accuracy Parity, Bounded Group Loss). The appropriate choice of fairness criterion depends on the application context and legal or ethical considerations.

\subsection{\RV{Fairness measurements, fairness-aware training algorithms and statistical fairness testing for regression models}}
\RV{Research on fairness in regression has primarily focused on defining appropriate fairness measurements and developing fairness-aware model training algorithms. Various measurements have been proposed to quantify fairness violations in regression models. The most common approach uses difference-based metrics (e.g., mean difference), which quantify the absolute difference between groups for a given fairness criterion~\citep{calders2013controlling,germino2025intersectional}. Beyond this, Steinberg et al. introduced probabilistic classification-based measures that capture distributional differences in predictions across groups~\cite{steinberg2020fairness}, and Suárez Ferreira et al. developed a systematic framework that helps practitioners measure and compare fairness across different approaches to regression models~\cite{ferreira2025general}. Alternatively, multiple approaches have been developed to enable fairness-aware training in regression. In-processing techniques incorporate fairness constraints during model training~\cite{chai2022fairness,chzhen2020fair,shah2022selective,berk2017convex,komiyama2018nonconvex}. Post-processing methods modify predictions to satisfy fairness criteria~\cite{xian2024differentially,chzhen2020fairwasserstein}. In addition, Agarwal et al.~\cite{agarwal2019fair} proposed a reduction-based framework that converts fair regression into a sequence of cost-sensitive classification problems, enabling the use of existing fairness-aware classification algorithms. 

The above fairness measurements and fairness-aware training algorithms are designed for model evaluation and improvement rather than formal fairness testing, and thus cannot provide decisions on whether observed fairness violations are statistically significant or whether models require unfairness mitigation. To our knowledge, methods for statistically testing regression fairness remain limited. Non-parametric permutation tests~\citep{diciccio2020evaluating}, and distributional tests such as the k-sample Anderson-Darling test~\citep{scholz1987k, gursoy2022error}, have been explored, but these methods either suffer from scalability issues or are difficult to generalize across fairness criteria in regression contexts.}

\subsection{\RV{Statistical fairness testing for classification models}}
\RV{Statistical fairness testing for classification models employs either permutation-based tests or the Wasserstein projection-based tests. Permutation-based tests, similar to those used in regression settings, suffer from scalability issues that limit their practical applicability~\cite{diciccio2020evaluating}.} 

Wasserstein projection-based tests leverage the geometry of the feature space to compare empirical distributions across demographic groups with a reference distribution that represents a fair model~\citep{taskesen2021statistical,si2021testing,chen2025fairness}. Embedded within a statistical hypothesis testing framework, one can evaluate whether a classifier's predictions differ significantly across groups, thereby testing the significance with respect to the given fairness criterion. This type of method is particularly useful for formalizing fairness as a testable hypothesis, where the null hypothesis typically asserts that a classifier treats groups (e.g., male and female) similarly under a chosen fairness criterion. Unlike Boolean fairness checks, this statistical approach provides a quantifiable measure of deviation from fairness. However, existing implementations of Wasserstein-based fairness tests are often computationally demanding and limited to binary classification settings. 
In this paper, we extend the framework proposed in~\cite{taskesen2021statistical} to regression models and identify special cases in regression problems that do not encounter computational issues.

\subsection{Fairness testing paradigms other than statistical testing}
\RV{Other fairness testing research focuses on generating discriminatory instances to reveal fairness violations; these approaches employ various strategies, including random generation~\cite{fan2022explanation}, search-based techniques~\cite{harman2015achievements,harman2001search,lakhotia2007multi}, and verification-based methods~\cite{albarghouthi2017fairsquare,bastani2019probabilistic,ghosh2022algorithmic}. Another prevalent approach uses metamorphic relations as test oracles~\cite{pu2022fairness,sathiesh2021aequevox}, expecting that perturbing only sensitive attributes should not change predictions, with applications to natural language processing~\cite{dhamala2021bold,huang2020reducing,liu2020does,sheng2019woman}, computer vision~\cite{pu2022fairness}, and speech recognition~\citep{rajan2022aequevox}. While effective at finding individual violations, these methods primarily address individual fairness (whether similar individuals receive similar outcomes) and typically report the existence of discriminatory instances rather than quantifying the statistical significance of group-level disparities. In contrast, statistical fairness testing provides quantifiable statistical guarantees about group-level fairness through hypothesis testing.}

\section{Categorization of fairness criteria for regression problems}
\label{sec:catogorization}
The common fairness criteria for regression problems can be broadly grouped into two categories: expectation-based fairness and distribution-based fairness.

Let $d(Y, \mathcal{R}(X))$ denote a discrepancy function between the true label $Y$ and the model prediction $\mathcal{R}(X)$. Let $\hat{p}_a^N$ denote the empirical marginal probability when the sensitive attribute $A = a$.

Expectation-based criteria assess whether model predictions or errors have similar expected values (means) across groups.  It includes two subtypes: exact expectation equivalence and expectation equivalence within a tolerance, defined as follows.

\begin{definition}[Exact Expectation Equivalence]
\label{def:exact-expectation-equivalence}
We say a model satisfies \emph{exact expectation equivalence} if:
\[
\mathbb{E}_{\mathbb{P}}[\phi(X, A, Y)] = 0,
\]
where
\[
\phi(X, A, Y) = \frac{d(Y, \mathcal{R}(X)) \cdot \mathbbm{1}_{A = 1}}{\hat{p}_1^N} - \frac{d(Y, \mathcal{R}(X)) \cdot \mathbbm{1}_{A = 0}}{\hat{p}_0^N}.
\]
\end{definition}

\begin{definition}[Expectation Equivalence Within a Tolerance]
\label{def:approx-expectation-equivalence}
We say a model satisfies \emph{expectation equivalence within a tolerance} if:
\[
\forall a \in \mathcal{A}, \quad \mathbb{E}_{\mathbb{P}}[\phi(X, A, Y)] \leq 0,
\]
where
\[
\phi(X, A, Y) = \frac{d(Y, \mathcal{R}(X)) \cdot \mathbbm{1}_{A = a}}{\hat{p}_a^N} - \varepsilon_a,
\]
and $\varepsilon_a \geq 0$ is a user-specified tolerance for group $a$.
\end{definition}

\paragraph{Examples:}
\begin{itemize}
    \item For \emph{exact expectation equivalence}, common choices for the discrepancy function $d$ include:
    \begin{itemize}
        \item Equal mean: $d(Y, \mathcal{R}(X)) = \mathcal{R}(X)$ 
        \item Accuracy parity: $d(Y, \mathcal{R}(X)) = E(Y, \mathcal{R}(X))$ where $E$ denotes an error function (e.g., absolute error $|Y - \mathcal{R}(X)|$)
    \end{itemize}
    \item For \emph{expectation equivalence within a tolerance}, examples include:
    \begin{itemize}
        \item Bounded group loss: $d(Y, \mathcal{R}(X)) = l(Y, \mathcal{R}(X))$ 
        \item Generalized $\epsilon$-fairness~\citep{taskesen2021statistical,si2021testing}
    \end{itemize}
\end{itemize}

Distribution-based criteria evaluate fairness by comparing the full distributions of predictions across groups, capturing differences in both mean and distribution shape. It includes exact equivalence and equivalence within a tolerance, defined as follows.

\begin{definition}[Exact Distributional Equivalence]
\label{def:exact-distributional-equivalence}
A model satisfies \emph{exact distributional equivalence} if:
\[
\mathbb{P}(d(Y, \mathcal{R}(X)) \mid A = 0) \overset{d.}{=} \mathbb{P}(d(Y, \mathcal{R}(X)) \mid A = 1),
\]
where $\overset{d.}{=}$ denotes equality in distribution. This criterion requires that the conditional distributions of the discrepancy function be identical across groups. 
\end{definition}

\begin{definition}[Distributional Equivalence Within a Tolerance]
\label{def:approx-distributional-equivalence}
A model satisfies \emph{distributional equivalence within a tolerance} if:
\[
h\left(\mathbb{P}(d(Y, \mathcal{R}(X)) \mid A = 0), \mathbb{P}(d(Y, \mathcal{R}(X)) \mid A = 1)\right) \leq \varepsilon,
\]
where $h$ is a divergence or distance function that quantifies the difference between the conditional distributions, and $\varepsilon \geq 0$ is a user-defined tolerance level.
\end{definition}

\paragraph{Examples:}
\begin{itemize}
    \item For \emph{exact distributional equivalence}, examples include statistical parity and error parity. 
    \item For \emph{distributional equivalence within a tolerance}, examples include  total variation and Kolmogorov–Smirnov  fairness~\citep{chzhen2022minimax}.
\end{itemize}

\section{Wasserstein projection-based fairness testing for regression models} 
\label{sec:thm-fairness-testing}
In this section, we present our Wasserstein projection-based fairness testing framework for regression models focusing on expectation-based criteria. Although expectation-based criteria may not capture all distributional disparities, they are generally more interpretable and widely used compared to distribution-based criteria, which involve more complex comparisons of full distributions~\citep{dixon2018primer,meyners2012equivalence}. We present the construction of a test statistic (\Cref{sec:construction}), its computation (\Cref{sec:computation}), and the derivation of an asymptotic upper bound (\Cref{sec:limiting-distribution}). Proofs can be found in~\Cref{sec:proofs}.

% In~\Cref{sec:validation-limiting-distribution}, we present an example to validate the derived asymptotic upper bound.

Within expectation-based criteria, we note that exact expectation equivalence can be considered a special case of expectation equivalence within a tolerance; it can be recovered from expectation equivalence within a tolerance by using the $\phi$ function from the exact expectation equivalence definition and setting $\varepsilon_a = 0$. Therefore, we focus on the more general case of expectation equivalence within a tolerance. We discuss the exact expectation equivalence case only when it leads to different results.

\subsection{Construction of the test statistic}
\label{sec:construction}
We construct the hypothesis as:
$$\mathcal{H}_0: \textit{ the regressor } \mathcal{R} \textit{ is fair},$$ against the alternative hypothesis:
$$\mathcal{H}_1: \textit{ the regressor } \mathcal{R} \textit{ is not fair}.$$
Let $\mathbb{Q}$ be any joint distribution in the space of $\mathcal{P}$. We define the set of fair distributions with respect to $\mathcal{R}$ as
\begin{align*}
    \mathcal{F}_{\mathcal{R}} &= \{\mathbb{Q} \in \mathcal{P}: \mathcal{R} \textit{ is fair relative to } \mathbb{Q}\}.
\end{align*}
We can reinterpret the hypothesis test as:
$$\mathcal{H}_0: \mathbb{P} \in \mathcal{F}_\mathcal{R}, \textit{ } \mathcal{H}_1: \mathbb{P} \notin \mathcal{F}_\mathcal{R}.$$
To measure how far the observed data distribution \(P\) deviates from fairness, we use the Wasserstein distance under a cost function \(c\) defined on tuples \((x, a, y), (x', a', y') \in \mathcal{X} \times \mathcal{A} \times \mathcal{Y}\):
\begin{equation}
\begin{aligned}
c((x, a, y), (x', a', y')) = \alpha \|x - x'\| 
   + \infty \cdot |a - a'| 
   + \beta |y - y'|,
\end{aligned}
\label{eq:cost-function}
\end{equation}
where \(\alpha, \beta \geq 0\), \(\|\cdot\|\) denotes the Euclidean norm, and the infinite cost between samples with different sensitive attribute values enforces that no mass is transported across groups. This reflects an assumption of absolute trust in the integrity of the sensitive attribute, as discussed in~\citep{taskesen2020distributionally,xue2020auditing}. 
Using $c$, the Wasserstein distance between distributions \(\mathbb{P}\) and \(\mathbb{Q}\) with respect to it is:

\[
W_c(\mathbb{P}, \mathbb{Q}) = \inf_{\pi \in \Pi(\mathbb{P}, \mathbb{Q})} \int_{\mathcal{X} \times \mathcal{X}} c(x, y) \, d\pi(x, y),
\]
where \(\Pi(\mathbb{P}, \mathbb{Q})\) is the set of all joint couplings with marginals \(\mathbb{P}\) and \(\mathbb{Q}\).
Then, we construct the test statistic \(\mathcal{T}\) as:
\begin{equation}
    \mathcal{T} := \inf_{\mathbb{Q} \in \mathcal{F}_\mathcal{R}} W_c^2(\mathbb{P}, \mathbb{Q}).
\end{equation}
The statistic \(\mathcal{T}\) captures the minimum Wasserstein distance between the observed distribution \(\mathbb{P}\) and a constrained set of distributions \(\mathcal{F}_\mathcal{R}\), providing a way to evaluate the fairness criteria encoded in the choice of cost function $c$ and constraint set $\mathcal{F}_\mathcal{R}$.
As $\mathbb{P} \in \mathcal{F}_\mathcal{R}$ if and only if $\mathcal{T} = 0$, we can reinterpret the hypothesis as:
$$\mathcal{H}_0: \mathcal{T} = 0, \textit{ } \mathcal{H}_1: \mathcal{T} > 0.$$

\subsection{Computation of the test statistic}
\label{sec:computation}

Computing $\mathcal{T}$ involves solving an optimization over an infinite-dimensional space. To transform the problem into an optimization problem over a finite space, we reformulate the problem using duality theory.

We define the following notations. Let $(\hat{X}, \hat{A}, \hat{Y})$ denote the empirical distribution of $(X,A,Y)$. Let $\hat{\mathbb{P}}^N$ denote joint distribution of $(\hat{X}, \hat{A}, \hat{Y})$. Let $\hat{p}^N \in \mathbb{R}_{++}^{|\mathcal{A}|}$ be the vector of empirical marginals of $A$, \RV{and $\hat{p}^N_a$ denote denote the empirical marginal when $A=a$.} \RV{We define $\mathcal{F}_\mathcal{R}(\hat{p}^N)$ as the set of marginally constrained fair distributions, $$\mathcal{F}_\mathcal{R}(\hat{p}^N) = \{\mathbb{Q} \in \mathbb{P}: \mathcal{R} \text{ is fair to }\mathbb{Q} \text{ and } \mathbb{Q}(A=a) = \hat{p}^N_a, \forall a \in \mathcal{A}\}.$$}
We transform the optimization space into a more-constrained probability space using the following theorem.
\begin{theorem}
\label{thm:marginal}
Suppose $\mathbb{Q} \in \mathcal{F}_\mathcal{R}$ satisfies $W_c^2(\hat{\mathbb{P}}^N, \mathbb{Q}) < \infty$, then $$\mathop{inf}\limits_{\mathbb{Q} \in \mathcal{F}_\mathcal{R}} W_c^2(\hat{\mathbb{P}}^N, \mathbb{Q}) = \mathop{inf}\limits_{\mathbb{Q} \in \mathcal{F}_\mathcal{R}(\hat{p}^N)} W_c^2(\hat{\mathbb{P}}^N, \mathbb{Q}).$$
\end{theorem}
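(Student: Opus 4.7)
The plan is to exploit the infinite penalty on mismatched sensitive attributes that is built into the cost $c$. The set $\mathcal{F}_\mathcal{R}(\hat p^N)$ is (presumably) the subset of $\mathcal{F}_\mathcal{R}$ consisting of distributions whose $A$-marginal equals $\hat p^N$, so the inclusion $\mathcal{F}_\mathcal{R}(\hat p^N) \subseteq \mathcal{F}_\mathcal{R}$ gives the direction
\[
\inf_{\mathbb{Q} \in \mathcal{F}_\mathcal{R}} W_c^2(\hat{\mathbb{P}}^N, \mathbb{Q}) \;\le\; \inf_{\mathbb{Q} \in \mathcal{F}_\mathcal{R}(\hat p^N)} W_c^2(\hat{\mathbb{P}}^N, \mathbb{Q})
\]
for free. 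The work is in the reverse inequality.

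For the nontrivial direction, I would show that whenever $\mathbb{Q} \in \mathcal{F}_\mathcal{R}$ has $W_c^2(\hat{\mathbb{P}}^N, \mathbb{Q}) < \infty$, the $A$-marginal of $\mathbb{Q}$ automatically coincides with $\hat p^N$, which immediately places $\mathbb{Q}$ in $\mathcal{F}_\mathcal{R}(\hat p^N)$. Pick any coupling $\pi \in \Pi(\hat{\mathbb{P}}^N, \mathbb{Q})$ with finite cost (such a $\pi$ exists because the infimum defining $W_c^2$ is finite, and we may pass to an $\varepsilon$-optimal coupling if the infimum is not attained). Because $c$ contains the term $\infty \cdot |a-a'|$, finiteness of $\int c \, d\pi$ forces $a = a'$ for $\pi$-almost every pair $((x,a,y),(x',a',y'))$. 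Hence the second-coordinate marginal of $\pi$ on $\mathcal{A}$ agrees with the first, and by construction these are the $A$-marginals of $\mathbb{Q}$ and $\hat{\mathbb{P}}^N$ respectively; so the $A$-marginal of $\mathbb{Q}$ equals $\hat p^N$.

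From here I would conclude by restricting the infimum. Any $\mathbb{Q}$ contributing a finite value to $\inf_{\mathbb{Q} \in \mathcal{F}_\mathcal{R}} W_c^2(\hat{\mathbb{P}}^N, \mathbb{Q})$ lies in $\mathcal{F}_\mathcal{R}(\hat p^N)$ by the previous paragraph, and distributions with infinite cost do not affect the infimum (which is finite by the hypothesis of the theorem, since some such $\mathbb{Q}$ exists). Therefore
\[
\inf_{\mathbb{Q} \in \mathcal{F}_\mathcal{R}} W_c^2(\hat{\mathbb{P}}^N, \mathbb{Q}) \;=\; \inf_{\substack{\mathbb{Q} \in \mathcal{F}_\mathcal{R} \\ W_c^2(\hat{\mathbb{P}}^N, \mathbb{Q}) < \infty}} W_c^2(\hat{\mathbb{P}}^N, \mathbb{Q}) \;\ge\; \inf_{\mathbb{Q} \in \mathcal{F}_\mathcal{R}(\hat p^N)} W_c^2(\hat{\mathbb{P}}^N, \mathbb{Q}),
\]
which, combined with the trivial direction, yields equality.

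The main obstacle is a modest measure-theoretic one: the finite-cost coupling need not be an optimizer (the infimum may fail to be attained on the non-compact set $\Pi(\hat{\mathbb{P}}^N, \mathbb{Q})$), so I need to work with an $\varepsilon$-optimal coupling and argue the $\pi$-almost-sure equality $a=a'$ from the finiteness of $\int c\, d\pi$ rather than from optimality. The convention $0 \cdot \infty = 0$ in the cost must also be treated carefully so that $\int \infty \cdot |a-a'|\, d\pi < \infty$ legitimately forces $\pi(\{a \ne a'\}) = 0$; once that is in hand, the marginal-matching step is purely bookkeeping.
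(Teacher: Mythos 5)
Your proof is correct and follows essentially the same route as the paper: both arguments use the infinite cross-group penalty in the cost $c$ to force the $A$-marginal of any finite-distance $\mathbb{Q} \in \mathcal{F}_\mathcal{R}$ to coincide with $\hat{p}^N$, reducing the claim to a set-inclusion argument. If anything, your version is slightly more careful than the paper's (which tacitly assumes an optimal coupling is attained and concludes $\mathcal{F}_\mathcal{R} = \mathcal{F}_\mathcal{R}(\hat{p}^N)$ outright), since you pass to $\varepsilon$-optimal couplings and explicitly note that infinite-cost members of $\mathcal{F}_\mathcal{R}$ cannot affect the infimum.
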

\Cref{thm:marginal} suggests that it is sufficient to consider the Wasserstein projection onto the marginally-constrained set of fair distributions $\mathcal{F}_\mathcal{R}(\hat{p}^N).$ Hence, we can rewrite the test statistic $\mathcal{T}$ as:
\begin{align}
\mathcal{T} = 
\left\{
    \begin{aligned}
\mathop{inf}\limits_{\mathbb{Q}} & \textit{ }W_c^2(\hat{\mathbb{P}}^N, \mathbb{Q}) \\
\textit{s.t. }& \mathbb{E}_\mathbb{Q}[\phi(X, A, Y)] \leq 0 \\
& \forall a \in \mathcal{A}, \mathbb{E}_\mathbb{Q}[\mathbbm{1}_a(A)] = \hat{p}_a^N
\end{aligned}
    \right..
    \label{eq:test_stat}
\end{align}
Though constrained, the optimization problem in~\Cref{eq:test_stat} is still over an infinite-dimensional probability space. We further reformulate~\Cref{eq:test_stat} as a finite-dimensional optimization using the following theorem.
\begin{theorem}[Dual reformulation]
Let $\mathcal{X}$ and $\mathcal{Y}$ denote the Euclidean space for $X$ and $Y$. Let $(\hat{x}_i, \hat{a}_i, \hat{y}_i)$ denote i.i.d. samples from $\mathbb{\hat{P}}^N$. Then, 
\begin{widetext}
\begin{equation}
    \mathcal{T} = 
    \frac{1}{N} \underbrace{\mathop{sup}\limits_{\gamma \in \mathbb{R}} \sum_{i = 1}^N \underbrace{\mathop{inf}\limits_{\substack{x_i \in \mathcal{X} \\ y_i \in \mathcal{Y}}} \gamma \phi(x_i, \hat{a}_i, y_i) + (\alpha \| x_i - \hat{x}_i\| + \beta |y_i - \hat{y}_i|)^2}_{\text{inner minimization}}}_{\text{outer maximization}}.
\end{equation}
\end{widetext}
\label{thm:dual}
\end{theorem}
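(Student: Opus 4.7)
The plan is to apply Lagrangian duality to remove the expectation inequality constraint in \Cref{eq:test_stat} and then exploit the Kantorovich primal representation of $W_c^2$ together with the atomic structure of $\hat{\mathbb{P}}^N$ to collapse the remaining infinite-dimensional infimum into $N$ separable pointwise minimizations.

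First, I would introduce a multiplier $\gamma \geq 0$ for the inequality constraint and form the Lagrangian
\[
L(\mathbb{Q},\gamma) \;=\; W_c^2(\hat{\mathbb{P}}^N,\mathbb{Q}) \;+\; \gamma\,\mathbb{E}_\mathbb{Q}[\phi(X,A,Y)],
\]
keeping the marginal equalities $\mathbb{E}_\mathbb{Q}[\mathbbm{1}_a(A)] = \hat{p}_a^N$ in force. Because the objective is convex in $\mathbb{Q}$ and the remaining constraints are linear, I would invoke strong duality to obtain $\mathcal{T} = \sup_{\gamma \geq 0}\inf_\mathbb{Q} L(\mathbb{Q},\gamma)$, with the Slater-type condition either verified directly (using a feasible $\mathbb{Q}$ with strictly negative $\mathbb{E}_\mathbb{Q}[\phi]$) or recovered through a perturbation/limit argument. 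The statement in \Cref{thm:dual} writes $\gamma\in\mathbb{R}$, which is natural in the exact-equivalence specialization where the constraint becomes an equality and the multiplier is unconstrained.

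Next, I would open the inner Wasserstein cost via its coupling formulation. Let $\pi$ be any coupling with marginals $\hat{\mathbb{P}}^N$ and $\mathbb{Q}$. Because $\hat{\mathbb{P}}^N=\tfrac{1}{N}\sum_i \delta_{(\hat{x}_i,\hat{a}_i,\hat{y}_i)}$ is atomic, every such $\pi$ disintegrates as
\[
\pi \;=\; \tfrac{1}{N}\sum_{i=1}^N \delta_{(\hat{x}_i,\hat{a}_i,\hat{y}_i)} \otimes \pi_i,
\]
and the infinite penalty in $c$ when $a\neq a'$ forces each $\pi_i$ to sit on $\mathcal{X}\times\{\hat{a}_i\}\times\mathcal{Y}$, which automatically enforces the prescribed $A$-marginal on $\mathbb{Q}$. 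Using $\mathbb{E}_\mathbb{Q}[\phi]=\mathbb{E}_\pi[\phi(X',A',Y')]$ for the second coordinate of $\pi$, the Lagrangian reduces to
\[
\frac{1}{N}\sum_{i=1}^N \int_{\mathcal{X}\times\mathcal{Y}} \Bigl[(\alpha\|x-\hat{x}_i\|+\beta|y-\hat{y}_i|)^2 + \gamma\,\phi(x,\hat{a}_i,y)\Bigr]\, d\pi_i(x,y),
\]
which is additively separable across $i$ and is minimized by concentrating each $\pi_i$ on any pointwise minimizer of the bracketed integrand. This replaces the infimum over $\pi$ by $\tfrac{1}{N}\sum_{i=1}^N \inf_{x_i,y_i}\bigl[\gamma\,\phi(x_i,\hat{a}_i,y_i)+(\alpha\|x_i-\hat{x}_i\|+\beta|y_i-\hat{y}_i|)^2\bigr]$, and reinstating the outer supremum over $\gamma$ produces the claimed identity.

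The main obstacle is the first step: rigorously justifying strong duality and the interchange of $\inf_\mathbb{Q}$ with $\sup_\gamma$ on an infinite-dimensional space of probability measures. I would handle this by invoking a Sion/Fenchel--Rockafellar-type minimax theorem in the spirit of the Wasserstein DRO framework of \citep{taskesen2021statistical}, after verifying lower semicontinuity of the cost and of $\phi$ together with growth/integrability conditions that keep the primal finite (the assumption $W_c^2(\hat{\mathbb{P}}^N,\mathbb{Q})<\infty$ from \Cref{thm:marginal} supplies exactly this). A secondary technicality is the measurable-selection step replacing $\inf_{\pi_i}\int g\,d\pi_i$ with the pointwise $\inf_{x,y} g(x,y)$, which is routine once $\phi$ is assumed Borel measurable.
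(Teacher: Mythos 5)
Your route is correct in outline and lands on the same two pillars as the paper's proof: a strong-duality step to bring in the multiplier $\gamma$, followed by exploiting the atomic structure of $\hat{\mathbb{P}}^N$ (together with the infinite cross-group cost) to collapse the infimum over couplings into $N$ separable pointwise minimizations. The packaging differs, though. The paper does not form a partial Lagrangian on the single fairness constraint and then disintegrate; instead it rewrites $\mathcal{T}$ as a generalized moment problem over $\pi \in \mathcal{M}_+(\Xi \times \hat{\Xi}^N)$ with $N+1$ linear moment constraints (the $N$ constraints pinning the empirical marginal to mass $1/N$ on each atom, plus $\mathbb{E}_\pi[\phi]=0$), invokes the strong duality theorem for moment problems of Smith (1995), and obtains a dual with variables $b \in \mathbb{R}^N$ and $\gamma \in \mathbb{R}$; eliminating $b$ yields the inner infima. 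Your version trades the $N$ extra dual variables for a disintegration argument and a Sion/Fenchel--Rockafellar minimax swap; either way the separability step is the easy part and is handled essentially identically.

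The substantive gap is that you defer exactly the step where the paper does its real work: the constraint qualification. The paper's Lemma~\ref{lemma:slater} explicitly constructs a family of measures $\pi$ showing that $\bar{q}=(1/N,\dots,1/N,0)$ lies in the interior of the achievable moment set, and that construction is not free --- it leans on structural assumptions about the discrepancy function $d$ (surjectivity onto a bounded interval $[0,\omega]$, so that one can place mass at points realizing prescribed values of $d(\mathcal{R}(x),y)$ in each group). Your proposal says the Slater-type condition would be ``verified directly\dots or recovered through a perturbation/limit argument'' without identifying what properties of $\phi$ or $d$ make this possible, and a generic lower-semicontinuity-plus-integrability appeal will not by itself produce an interior feasible point. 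Also note that the paper's coupling formulation imposes $\mathbb{E}_\pi[\phi]=0$ as an equality, which is why $\gamma$ ranges over all of $\mathbb{R}$ in the theorem; your aside about the sign of $\gamma$ is consistent with this but worth making explicit if you start from the inequality form of \Cref{eq:test_stat}.
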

\Cref{thm:dual} asserts that computing the squared projection $\mathcal{T}$ is equivalent to solving a finite dimensional problem. The difficulty of solving~\Cref{thm:dual} depends on the structure of $\phi$. 
% In general, the computation involves a nested optimization procedure: the inner minimization is solved using the BFGS algorithm~\citep{dai2002convergence}, while the outer maximization is handled by a 1D optimization method~\citep{nocedal1999numerical}. 
Furthermore, the computation can become significantly more tractable when analytical forms of $\mathcal{R}$ and $d$ are available. For example, in the special case of equal mean fairness criteria (see~\Cref{sec:catogorization}) and of linear regression, the dual objective have a closed-form solution.

\begin{corollary}[Special case of~\Cref{thm:dual}]
\label{cor:dual}
    Suppose $\mathcal{R}(x) = \rho^T x+\sigma$ (i.e., linear regression) and $d(y, \hat{y}) = \hat{y}$ (i.e., equal mean), and $\alpha=1$, Then,    
    $$\mathcal{T} = \frac{(\sum_{i=1}^N \lambda(\hat{a}_i)(\rho^T \hat{x}_i + \sigma))^2}{N \|\rho\|^2(\sum_{i=1}^N \lambda(\hat{a}_i)^2)},$$
    where $\lambda(a) = (\hat{p}_1^N)^{-1}\mathbbm{1}_1(a) - (\hat{p}_0^N)^{-1}\mathbbm{1}_0(a).$
\end{corollary}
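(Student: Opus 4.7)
The plan is to specialize the dual formula in Theorem \ref{thm:dual} to the assumed setting and solve the resulting unconstrained optimization in closed form, taking advantage of the fact that everything becomes quadratic in $(x_i, \gamma)$.

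First, I would substitute the assumptions into $\phi$. For the equal mean criterion we have $d(y, \mathcal{R}(x)) = \mathcal{R}(x) = \rho^T x + \sigma$, which does not depend on $y$, so
\[
\phi(x_i, \hat a_i, y_i) = \lambda(\hat a_i)\,(\rho^T x_i + \sigma),
\]
with $\lambda$ as defined in the statement. Because $\phi$ is independent of $y_i$, the inner infimum over $y_i$ reduces to minimizing $\beta\lvert y_i - \hat y_i\rvert$, which is attained at $y_i = \hat y_i$ with value $0$. This removes the $y$ variables from the problem entirely (so the value of $\beta$ plays no role), leaving an inner minimization only over $x_i$.

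Next, I would change variables $u_i = x_i - \hat x_i$ and rewrite the inner objective as
\[
\gamma\lambda(\hat a_i)(\rho^T\hat x_i + \sigma) + \gamma\lambda(\hat a_i)\rho^T u_i + \|u_i\|^2,
\]
which is a strictly convex quadratic in $u_i$. Its minimum is attained at $u_i^\star = -\tfrac{\gamma\lambda(\hat a_i)}{2}\rho$, giving optimal value $\gamma\lambda(\hat a_i)(\rho^T\hat x_i + \sigma) - \tfrac{\gamma^2\lambda(\hat a_i)^2}{4}\|\rho\|^2$. Summing over $i$ produces a one-dimensional concave quadratic in $\gamma$ of the form $\gamma S - \tfrac{\gamma^2 \|\rho\|^2 T}{4}$, where $S = \sum_i \lambda(\hat a_i)(\rho^T\hat x_i + \sigma)$ and $T = \sum_i \lambda(\hat a_i)^2$. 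Maximizing in $\gamma$ gives $\gamma^\star = 2S/(\|\rho\|^2 T)$ and the supremum $S^2/(\|\rho\|^2 T)$; dividing by $N$ yields the stated formula.

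The computation is essentially routine once Theorem \ref{thm:dual} is in hand; the only conceptual subtleties are (i) verifying that the $y$-minimization contributes nothing so that $\beta$ indeed drops out, and (ii) checking that the concave quadratic in $\gamma$ truly has a finite maximizer, which requires $T > 0$ and $\|\rho\|\neq 0$. Both follow under mild nondegeneracy: $T > 0$ as long as some sample has $\hat a_i = 0$ and some has $\hat a_i = 1$ (so that $\hat p_0^N, \hat p_1^N \in (0,1)$), and the case $\rho = 0$ is trivial since then the regressor is constant and fair, making $\mathcal{T}=0$ consistent with $S=0$. No genuine obstacle arises beyond careful bookkeeping of the quadratic coefficients.
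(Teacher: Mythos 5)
Your proposal is correct and follows essentially the same route as the paper's proof: eliminate the $y$-variables (the paper does this by setting $\beta=0$ since the equal-mean criterion ignores labels, while you observe the inner infimum over $y_i$ is attained at $y_i=\hat y_i$ — an equivalent and arguably cleaner justification), minimize the convex quadratic in $x_i$ along the direction $\rho$, and then maximize the resulting concave quadratic in $\gamma$. Your explicit change of variables $u_i = x_i - \hat x_i$ matches the paper's decomposition $x_i = \hat x_i - k_i\omega_i\rho - k_i'\rho^\perp$ with $k_i=\tfrac12$, $k_i'=0$, and your closed forms for $u_i^\star$, $\gamma^\star$, and $\mathcal{T}$ agree with the paper's.
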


\subsection{Derivation of an asymptotic upper bound}
\label{sec:limiting-distribution}
We describe the asymptotic behavior of $\mathcal{T}$ with the following theorem.
\begin{theorem}[Asymptotic upper bound]
\label{thm:asymptotic_upper_bound}
Assume the gradient of the regressor, $\nabla_X \mathcal{R}(x)$, is locally Lipschitz continuous. Under the null hypothesis $\mathcal{H}_0$ with the fairness criterion belonging to expectation equivalence within a tolerance, and $\alpha=1$, we have
$$N \times \mathcal{T} \lesssim_D \theta \chi_1^2,$$
where $\lesssim_D$ denotes a distributional upper bound~\citep{shorack2009empirical}, and $\chi_1^2$ is the chi-square distribution with 1 degree of freedom, and 

\begin{equation*}
    \begin{aligned}
    &\theta = \frac{\mathop{Cov}(Z')}{\mathbb{E}_{\mathbb{P}} \left\|\triangledown_X d(\mathcal{R}(X), Y) \left(\frac{\mathbbm{1}_1(A)}{p_1} - \frac{\mathbbm{1}_0(A)}{p_0}\right) \right\|^2} ,\\
    &\text{where $\triangledown_X$ denotes the derivative with respect to $X$},\\[10pt]
    &
    \begin{aligned}
    Z' = \frac{1}{p_0 p_1} \Bigg\{ d(Y, \mathcal{R}(X))(p_0\mathbbm{1}_1(A) - p_1\mathbbm{1}_0(A)) + \mathbbm{1}_0(A) \mathbb{E}_\mathbb{P}[d(Y, \mathcal{R}(X)) \mathbbm{1}_1(A)] - \mathbbm{1}_1(A) \mathbb{E}_\mathbb{P}[d(Y, \mathcal{R}(X)) \mathbbm{1}_0(A)] \Bigg\}.
    \end{aligned}
    \\[5pt]
\end{aligned}
\end{equation*}
\label{thm:limiting_extension}
\end{theorem}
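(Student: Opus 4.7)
The plan is to carry out a small-noise expansion of the finite-dimensional dual representation in \Cref{thm:dual} and then identify the limiting distribution via CLT and delta-method arguments. Under $\mathcal{H}_0$ we expect $\mathcal{T}\to 0$ at rate $1/N$, so in
\[
\mathcal{T} = \tfrac{1}{N}\sup_{\gamma\in\mathbb{R}} \sum_{i=1}^N \inf_{x_i,y_i}\bigl[\gamma\,\phi(x_i,\hat{a}_i,y_i) + (\|x_i-\hat{x}_i\|+\beta|y_i-\hat{y}_i|)^2\bigr],
\]
the optimizers satisfy $\gamma^\star = O_p(N^{-1/2})$ and $(x_i-\hat{x}_i,y_i-\hat{y}_i) = O_p(N^{-1/2})$. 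This small-noise scaling is what justifies linearizing $\phi$ around each data point.

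Next I would expand the inner minimization. Using the local Lipschitz continuity of $\nabla_X\mathcal{R}$ assumed in the theorem, Taylor-expand $\phi$ around $(\hat{x}_i,\hat{y}_i)$. The inner problem then reduces, to leading order, to a linear-in-$(u_i,v_i)$ perturbation of a convex quadratic whose minimum is attained in closed form, yielding
\[
\inf_{x_i,y_i}(\cdots) \approx \gamma\,\phi(\hat{x}_i,\hat{a}_i,\hat{y}_i) - \tfrac{\gamma^2}{4}\,\bigl\|\nabla\phi(\hat{x}_i,\hat{a}_i,\hat{y}_i)\bigr\|^2,
\]
where the gradient norm is Euclidean since $\alpha=1$ (and for the criteria in the paper, $\phi$ does not depend on $y$ through anything other than $\mathcal{R}$). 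The resulting outer maximization over $\gamma$ is a concave quadratic, solved explicitly to give
\[
N\cdot\mathcal{T} \;\approx\; \frac{\bigl(N^{-1/2}\sum_{i=1}^N \phi(\hat{x}_i,\hat{a}_i,\hat{y}_i)\bigr)^2}{N^{-1}\sum_{i=1}^N \|\nabla\phi(\hat{x}_i,\hat{a}_i,\hat{y}_i)\|^2}.
\]

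By the law of large numbers the denominator converges in probability to the denominator of $\theta$. For the numerator I would apply a CLT to $N^{-1/2}\sum_i\phi(\hat{x}_i,\hat{a}_i,\hat{y}_i;\hat{p}^N)$, which on the boundary of $\mathcal{H}_0$ has mean zero. Because $\phi$ itself depends on the random marginals $\hat{p}^N$, a one-step delta-method linearization $\phi(\cdot;\hat{p}^N) = \phi(\cdot;p) + \partial_p\phi\cdot(\hat{p}^N-p) + o_p(N^{-1/2})$ converts the sample average into a sum of i.i.d. mean-zero terms whose influence function is exactly the $Z'$ displayed in the theorem: the first term in $Z'$ comes from $\phi(\cdot;p)$ and the two remaining terms arise from the delta-method corrections in $\hat{p}_0^N$ and $\hat{p}_1^N$, with $\epsilon_a$ cancelling at the tight null. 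Continuous mapping then gives $N\cdot\mathcal{T} \xrightarrow{d} \theta\,\chi^2_1$ on the boundary; on the strict interior of $\mathcal{H}_0$ (where $\mathbb{E}_\mathbb{P}[\phi]<0$), $\mathcal{T}$ is of strictly smaller order, and the bound $\lesssim_D$ holds trivially. This gives the claimed stochastic dominance across all of $\mathcal{H}_0$.

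The main obstacle I anticipate is Step 3, in particular the joint asymptotic analysis that correctly couples the empirical marginal $\hat{p}^N$ with the sample average involving $d(Y,\mathcal{R}(X))\mathbbm{1}_a(A)$. One has to carry a joint CLT for these quantities, linearize through the $1/\hat{p}_a^N$ factor, and collect the three contributions into $Z'$; getting the signs and combinatorics right is where the precise form of $Z'$ is produced. Alongside this, one must verify that the quadratic remainder from the Taylor expansion in Step 2 (controlled by the local Lipschitz hypothesis on $\nabla_X\mathcal{R}$) and the second-order term from the delta-method are both $o_p(N^{-1/2})$ after multiplication by $\gamma^\star = O_p(N^{-1/2})$ — this is what legitimizes dropping them before squaring.
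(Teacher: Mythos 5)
Your route is genuinely different from the paper's. For this theorem the paper does not expand the dual at all: it first establishes the exact-expectation-equivalence case (\Cref{thm:limiting}) by verifying the hypotheses of Lemma~4 of \citet{blanchet2019robust} --- which is precisely the ``small-noise expansion plus CLT'' package you propose to re-derive by hand from \Cref{thm:dual} --- and then obtains the tolerance case in three lines by a feasible-set comparison: the tolerance statistic $\mathcal{T}^{tol}$ is pointwise dominated by the exact-equivalence statistic $\mathcal{T}$, whose limit is $\theta\chi_1^2$, and monotonicity of $\overline{\lim}\,\mathbb{E}[f(\cdot)]$ for bounded non-decreasing $f$ gives $\lesssim_D$. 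Your direct expansion is a legitimate self-contained alternative to citing that lemma, and your interior/boundary split is the standard way to treat one-sided constraints; but making the expansion rigorous (uniform control of the Taylor remainder, $\gamma^\star=O_p(N^{-1/2})$, interchange of $\sup$ with limits) is exactly the technical content the paper outsources, and your final CLT/delta-method step for $Z'$ essentially reproduces the computation the paper carries out inside the proof of \Cref{thm:limiting}.

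The genuine gap is in how you claim to recover the stated $\theta$ and $Z'$. The tolerance criterion imposes one constraint per group, $\phi_a=(\hat p_a^N)^{-1}d(Y,\mathcal{R}(X))\mathbbm{1}_a(A)-\varepsilon_a\le 0$, so your expansion applied directly to it yields a vector-valued influence function and a supremum over $\gamma\in\mathbb{R}_+^{|\mathcal{A}|}$, not the scalar difference $Z'$ displayed in the theorem; moreover the ``tight null'' for the tolerance constraints is the set where each group mean equals its own $\varepsilon_a$, which differs from the exact-equivalence null unless $\varepsilon_0=\varepsilon_1$, so the $\varepsilon_a$ do not simply ``cancel'' into the difference form. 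To land on the specific $\theta$ in the statement you need the paper's comparison with the exact-equivalence statistic (or an equivalent argument); without it your method proves a bound with a different, per-group covariance. A smaller slip: on the boundary a one-sided constraint gives at best a $\tfrac12\delta_0+\tfrac12\theta\chi_1^2$ mixture rather than full convergence to $\theta\chi_1^2$ --- harmless for an upper bound, but not what continuous mapping ``gives.''
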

By~\Cref{thm:limiting_extension}, we know the asymptotic upper bound of $\mathcal{N} \times \mathcal{T}$ follows a chi-square distribution. Since $\theta$ in the limiting distribution depends on $\mathbb{P}$, we need to derive a consistent estimator for $\theta$ so that we can use the limiting distribution for hypothesis testing. By the law of large numbers, the denominator in $\theta$ can be estimated by the sample average, that is,
\begin{equation*}
\begin{aligned}
    &\reallywidehat{\mathbb{E}_{\mathbb{P}} \|\triangledown_X d(\mathcal{R}(X), Y)(\frac{\mathbbm{1}_1(A)}{p_1} - \frac{\mathbbm{1}_0(A)}{p_0})\|^2} = \\
    &\frac{1}{N}\sum_{i=1}^N \|\triangledown_X d(\mathcal{R}(\hat{x}_i), \hat{y}_i)\left(\frac{\mathbbm{1}_1(\hat{a}_i)}{\hat{p}^N_1} - \frac{\mathbbm{1}_0(\hat{a}_i)}{\hat{p}^N_0}\right)\|^2, \text{ and}
\end{aligned}
\end{equation*}

\begin{equation*}
\begin{aligned}
    (\reallywidehat{\mathop{Cov}(Z')}) &=  \big\{\frac{1}{N\hat{p}_0 \hat{p}_1} \sum_{i=1}^N[\{d(\hat{y}_i, \mathcal{R}(\hat{x}_i))(p_0\mathbbm{1}_1(\hat{a}_i) - p_1\mathbbm{1}_0(\hat{a}_i))\}]   + \frac{1}{N} \mathbbm{1}_0(\hat{a}_i)  \sum_{j=1}^N [d(\hat{y}_i, \mathcal{R}(\hat{x}_i)) \mathbbm{1}_1(\hat{a}_i)]\\
    & - \frac{1}{N} \mathbbm{1}_1(\hat{a}_i) \sum_{j=1}^N[d(Y, \mathcal{R}(\hat{x}_i)) \mathbbm{1}_0(\hat{a}_i)]\big\}^2.
\end{aligned}
\end{equation*}

\begin{figure}[t]
    \centering
    \includegraphics[width=0.25\linewidth]{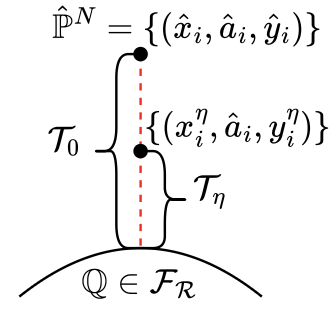}
    \caption{Optimal data perturbation. Starting from $\hat{\mathbb{P}}^N$, perturbed distributions $\{(x_i^\eta, \hat{a}_i, y_i^\eta)\}$ move along the projection path toward the fairness set $\mathcal{F}_R$, with $\eta=1$ achieving full fairness.}
    \label{fig:perturbation_illustration}
\end{figure}

When the fairness criterion belongs to exact expectation equivalence (see~\Cref{sec:catogorization}), we can tighten the asymptotic upper bound of $\mathcal{T}$ to a limiting distribution using the following theorem. 
\begin{theorem}[Limiting distribution]
Assume the gradient of the regressor, $\nabla_X \mathcal{R}(x, y)$, is locally Lipschitz continuous. Under the null hypothesis $\mathcal{H}_0$, with the fairness criterion belonging to exact expectation equivalence, and $\alpha=1$, we have $\theta$ and $Z'$ are defined the same as in~\Cref{thm:asymptotic_upper_bound}.
\label{thm:limiting}
\end{theorem}

% checked up to here

In the special case of linear regression and equal mean fairness criterion, we can have a simplified solution for computing the coefficient $\theta$.
\begin{corollary}  
[Special case of~\Cref{thm:limiting}]
    Suppose $\mathcal{R}(x) = \rho^T x+\sigma$ (i.e., linear regression) and $d(y, \hat{y}) = \hat{y}$ (i.e., equal mean criterion), the estimate of $\theta$ (i.e. $\hat{\theta}$ in~\Cref{thm:limiting}) can be simplified as,
    $$\hat{\theta} = \frac{\sum_{i=1}^N (\rho \hat{x}_i+\sigma)^2 \{\frac{\mathbbm{1}_1(\hat{a}_i)}{(\hat{p}_1^N)^2} + \frac{\mathbbm{1}_0(\hat{a}_i)}{(\hat{p}_0^N)^2}\}} {\|\rho\|^2\sum_{i=1}^N \{\frac{\mathbbm{1}_1(\hat{a}_i)}{(\hat{p}_1^N)^2} + \frac{\mathbbm{1}_0(\hat{a}_i)}{(\hat{p}_0^N)^2}\}}.$$
    \label{cor:special-case-limiting-distribution}
\end{corollary}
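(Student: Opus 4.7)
The plan is to obtain the stated simplified formula by direct substitution of $\mathcal{R}(x) = \rho^T x + \sigma$ and $d(y, \hat{y}) = \hat{y}$ into the general expression for $\theta$ in~\Cref{thm:limiting}, followed by algebraic manipulation that exploits the disjoint-support identity $\mathbbm{1}_0(A)\mathbbm{1}_1(A) \equiv 0$ together with the null-hypothesis condition $\mathbb{E}_\mathbb{P}[d\,\mathbbm{1}_1(A)]/p_1 = \mathbb{E}_\mathbb{P}[d\,\mathbbm{1}_0(A)]/p_0$.

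First I would simplify the denominator. Since $d(Y, \mathcal{R}(X)) = \mathcal{R}(X) = \rho^T X + \sigma$ is affine in $X$, the gradient $\nabla_X d(\mathcal{R}(X), Y) = \rho$ is a constant vector and factors out of the Euclidean norm. Applying $\mathbbm{1}_0(A)\mathbbm{1}_1(A) \equiv 0$, the scalar coefficient squared collapses to $\mathbbm{1}_1(A)/p_1^2 + \mathbbm{1}_0(A)/p_0^2$, so the population denominator reduces to $\|\rho\|^2 \mathbb{E}_\mathbb{P}[\mathbbm{1}_1(A)/p_1^2 + \mathbbm{1}_0(A)/p_0^2]$. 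Replacing the expectation and the marginals by their empirical counterparts gives exactly the denominator $\|\rho\|^2 \sum_{i=1}^N\{\mathbbm{1}_1(\hat{a}_i)/(\hat{p}_1^N)^2 + \mathbbm{1}_0(\hat{a}_i)/(\hat{p}_0^N)^2\}$ appearing in the claim (up to a $1/N$ that cancels in the final ratio).

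Second I would simplify $\mathop{Cov}(Z')$ under $\mathcal{H}_0$. Writing $m := m_0/p_0 = m_1/p_1$, the correction term in the definition of $Z'$ becomes $\mathbbm{1}_0 m_1 - \mathbbm{1}_1 m_0 = -m(p_0 \mathbbm{1}_1 - p_1 \mathbbm{1}_0)$, so $Z'$ collapses to $Z' = (d - m)(\mathbbm{1}_1(A)/p_1 - \mathbbm{1}_0(A)/p_0)$. This random variable is mean-zero, so $\mathop{Cov}(Z') = \mathbb{E}_\mathbb{P}[(Z')^2]$; expanding the square and applying the disjoint-support identity once more yields $\mathbb{E}_\mathbb{P}[(d-m)^2 \mathbbm{1}_1]/p_1^2 + \mathbb{E}_\mathbb{P}[(d-m)^2 \mathbbm{1}_0]/p_0^2$. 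Substituting $d = \rho^T X + \sigma$ and replacing the expectations by empirical averages with $\hat{p}_a^N$ in place of $p_a$ produces the numerator of the stated formula.

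Third I would take the ratio of these two empirical quantities; the common $1/N$ factor cancels, and the stated compact expression for $\hat{\theta}$ emerges. The main obstacle is the mean-centring step: a literal plug-in of $\mathop{Cov}(Z')$ carries a $(d - \hat{m})^2$ factor rather than the bare $(\rho^T \hat{x}_i + \sigma)^2$ that appears in the claim, so one must argue that the contribution $\hat{m}^2/(\hat{p}_0^N \hat{p}_1^N)$ is asymptotically negligible under $\mathcal{H}_0$ (and hence absorbed by the ``simplified'' estimator), or equivalently that the uncentred plug-in remains consistent for $\theta$ in this regime. The remaining algebra is the same style of indicator arithmetic used in the proofs of~\Cref{thm:dual} and~\Cref{thm:limiting}.
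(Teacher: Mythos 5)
There is nothing to compare against: the paper states this corollary without proof (it is only validated numerically in \Cref{sec:validation-limiting-distribution}), so yours is the only derivation on the table. Your overall strategy is the natural one, and most of it checks out. For the denominator, $\nabla_X d = \rho$ is constant, the disjoint-support identity collapses $\bigl(\mathbbm{1}_1(A)/p_1 - \mathbbm{1}_0(A)/p_0\bigr)^2$ to $w := \mathbbm{1}_1(A)/p_1^2 + \mathbbm{1}_0(A)/p_0^2$, and the $1/N$ cancels in the ratio — this matches the stated denominator exactly. Your reduction of $Z'$ under $\mathcal{H}_0$ to $(d-m)\bigl(\mathbbm{1}_1(A)/p_1 - \mathbbm{1}_0(A)/p_0\bigr)$ with $m = \mathbb{E}_\mathbb{P}[d\,\mathbbm{1}_1(A)]/p_1 = \mathbb{E}_\mathbb{P}[d\,\mathbbm{1}_0(A)]/p_0$, and hence $\mathop{Cov}(Z') = \mathbb{E}_\mathbb{P}[(d-m)^2 w]$, is also correct.

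The gap is in your proposed resolution of the mean-centring mismatch. You suggest arguing that the extra contribution of the uncentred plug-in is asymptotically negligible; it is not. A direct computation gives $\mathbb{E}_\mathbb{P}[d^2 w] - \mathbb{E}_\mathbb{P}[(d-m)^2 w] = m^2\,\mathbb{E}_\mathbb{P}[w] = m^2/(p_0 p_1)$, a fixed nonzero constant whenever the common group mean $m$ of the predictions is nonzero — which is the generic situation for $d(y,\hat y)=\hat y$. Consequently the stated $\hat\theta$ converges to $\theta + m^2/\|\rho\|^2$ rather than to the $\theta$ of \Cref{thm:limiting}, and no asymptotic argument can absorb the difference. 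What your algebra actually shows is that the corollary's formula is the consistent plug-in estimator of $\mathbb{E}_\mathbb{P}[\phi\phi^T]\big/\bigl(\|\rho\|^2\,\mathbb{E}_\mathbb{P}[w]\bigr)$ — i.e., of the uncentred second moment of $\phi$, which is how ``$\mathop{Cov}(Z')$'' is defined in the statement of \Cref{thm:asymptotic_upper_bound} and which equals the $\mathop{Cov}(\phi(X,A,Y))$ appearing in the first display of the paper's proof of \Cref{thm:limiting}, before the refinement to $Z'$. The two targets coincide only when $m=0$ (e.g., centred predictions). To make your proof close, you must either add that hypothesis, or prove the corollary as a simplification of $\mathbb{E}_\mathbb{P}[\phi\phi^T]$ and explicitly flag the discrepancy with the $\mathop{Cov}(Z')$ displayed in \Cref{thm:limiting}; the negligibility route is a dead end.
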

Numerical simulations illustrating~\Cref{cor:special-case-limiting-distribution} can be found in~\Cref{sec:validation-limiting-distribution}.

\section{Wasserstein projection-based optimal data perturbation for regression models}
In this section, we present an optimal data perturbation framework, as an extension coming from the fairness testing theorems, to improve fairness with respect to predefined fairness criteria by a degree, as formalized in the following theorem. 

% \begin{theorem}[Model correction]
% Let \( x_i^* \) denote the optimal solution to the inner optimization problem in~\Cref{thm:dual}, and let \( \mathbb{Q}^* \) be Wasserstein projection of $\hat{\mathbb{P}}^N$ onto $\mathcal{F}_\mathcal{R}$. Define \( x' \) as the \( \eta \)-barycenter between the empirical distribution \( \{\hat{x}_i\} \) from $\hat{\mathbb{P}}^N$ and $\{x_i\}$ from distribution \( \mathbb{Q}^* \). Then, under mild regularity conditions, the corrected regression model given by \( \mathcal{R}'(x) = \mathcal{R}(x') \) adjusts the standard fairness metric by a factor of \( \eta \).
% \label{thm:optimal-perturbation}
% \end{theorem}

% \subsection{Model correction by a degree}
\begin{theorem}[Optimal data perturbation]
    Let  $\delta_i = \phi(\hat{x}_i, \hat{a}_i, \hat{y}_i$),  which is the value of the standard fairness metric for each data point $i$. 
    Let 
    \begin{widetext}
    \begin{equation}
    \mathcal{T}_\eta = 
    \frac{1}{N} \mathop{sup}\limits_{\gamma \in \mathbb{R}} \sum_{i = 1}^N \underbrace{\mathop{inf}\limits_{\substack{x_i \in \mathcal{X} \\ y_i \in \mathcal{Y}}} \gamma (\phi(x_i, \hat{a}_i, y_i)+\eta\delta_i) 
    + (\alpha \| x_i - \hat{x}_i\| + \beta |y_i - \hat{y}_i|)^2}_{\text{inner minimization}},
    \label{eq:optimal-perturbation}
    \end{equation}
    \end{widetext}
where $\eta \in [0,1]$. Let $x_i^\eta \in \mathcal{X}$ and $y_i^\eta \in \mathcal{Y}$ be the optimal solutions from the inner minimization in~\Cref{eq:optimal-perturbation}. Then, the empirical distribution generated by $\mathcal{R}$ and $\{x_i^\eta, \hat{a}_i, y_i^\eta\}_{i=1}^N$ perturbs the model predictions such that the value of the standard fairness metric is reduced by $\eta$.
    \label{thm:optimal-perturbation}
\end{theorem}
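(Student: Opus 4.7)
The plan is to leverage Theorem~\ref{thm:dual} together with a first-order envelope argument on the outer supremum. Because $\delta_i=\phi(\hat x_i,\hat a_i,\hat y_i)$ does not depend on the decision variables $(x_i,y_i)$, the additive term $\gamma\eta\delta_i$ can be pulled out of each inner infimum, giving
\[
\mathcal{T}_\eta \;=\; \sup_{\gamma}\Bigl\{\gamma\eta\bar\delta \;+\; \tfrac{1}{N}\sum_{i=1}^N\inf_{x_i,y_i}\bigl[\gamma\phi(x_i,\hat a_i,y_i)+(\alpha\|x_i-\hat x_i\|+\beta|y_i-\hat y_i|)^2\bigr]\Bigr\},
\]
where $\bar\delta := \tfrac{1}{N}\sum_i \delta_i = \mathbb{E}_{\hat{\mathbb{P}}^N}[\phi]$ is the empirical value of the fairness metric on the unperturbed data. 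The inner problem is now identical to that of Theorem~\ref{thm:dual}, so for any fixed $\gamma$ the pointwise minimizers $(x_i^\eta(\gamma), y_i^\eta(\gamma))$ coincide with those from the unperturbed dual; only the outer optimizer $\gamma^\star_\eta$ depends on $\eta$.

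Next, I would invoke the same Wasserstein strong-duality identification used to derive Theorem~\ref{thm:dual} to recognize the displayed quantity as the Lagrangian dual of the \emph{shifted} Wasserstein projection
\[
\mathcal{T}_\eta \;=\; \inf_{\mathbb{Q}} W_c^2(\hat{\mathbb{P}}^N,\mathbb{Q})\quad\text{s.t.}\quad \mathbb{E}_\mathbb{Q}[\phi(X,A,Y)]\leq -\eta\bar\delta,\ \ \mathbb{E}_\mathbb{Q}[\mathbbm{1}_a(A)] = \hat p^N_a,
\]
whose optimal transport plan concentrates on the pointwise minimizers $\{(x_i^\eta,\hat a_i,y_i^\eta)\}_{i=1}^N$ and hence induces a well-defined perturbed empirical distribution (the infinite cost in~\eqref{eq:cost-function} enforces that the $A$-marginal is preserved, exactly as in Theorem~\ref{thm:marginal}).

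To read off the empirical fairness metric on this perturbed dataset, I would apply the envelope theorem (Danskin) to the outer supremum. Letting $G(\gamma):=\tfrac{1}{N}\sum_i\inf_{x_i,y_i}[\gamma\phi(x_i,\hat a_i,y_i)+(\alpha\|x_i-\hat x_i\|+\beta|y_i-\hat y_i|)^2]$, Danskin's theorem gives $G'(\gamma)=\tfrac{1}{N}\sum_i\phi(x_i^\eta(\gamma),\hat a_i,y_i^\eta(\gamma))$. The first-order condition $\eta\bar\delta + G'(\gamma^\star_\eta)=0$ for the outer $\sup_\gamma$ then yields
\[
\tfrac{1}{N}\sum_{i=1}^N\phi(x_i^\eta,\hat a_i,y_i^\eta) \;=\; -\eta\bar\delta,
\]
so the fairness metric of the perturbed empirical distribution is shifted from the original $\bar\delta$ to $-\eta\bar\delta$, which is the claimed reduction parameterized by $\eta$.

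The hard part is the technical justification of the envelope step: ensuring differentiability of $G$ at $\gamma^\star_\eta$, which requires uniqueness of the inner argmin (obtainable from strict convexity of the quadratic cost combined with the local Lipschitz regularity of $\nabla_X \mathcal{R}$ assumed in Theorems~\ref{thm:asymptotic_upper_bound}--\ref{thm:limiting}) and that $\gamma^\star_\eta$ lies in the interior of the Lagrange multiplier domain. The degenerate cases $\gamma^\star_\eta\in\{0,\infty\}$, corresponding to the shifted fairness constraint being either already satisfied or infeasible under the given quadratic transportation budget, would need separate treatment via complementary slackness, paralleling the corresponding step in the proof of Theorem~\ref{thm:dual}.
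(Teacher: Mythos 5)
Your proposal is correct in substance and arrives at the same identity the paper needs, $\tfrac{1}{N}\sum_i\phi(x_i^\eta,\hat a_i,y_i^\eta) = -\eta\bar\delta$ with $\bar\delta=\tfrac1N\sum_i\delta_i$, but by a genuinely different route. The paper's proof asserts that the KKT conditions of the inner minimization force the \emph{pointwise} identity $\phi(x_i^\eta,\hat a_i,y_i^\eta)+\eta\delta_i=0$ for every $i$ and then sums over $i$; you instead pull the constant $\gamma\eta\delta_i$ out of each inner infimum and apply Danskin's theorem to the outer supremum, so that the first-order condition in $\gamma$ yields the \emph{aggregate} identity directly. Your route is the more defensible one: the stationarity condition of the inner problem is $\gamma\nabla_x\phi+\nabla_x(\text{cost}^2)=0$, not $\phi+\eta\delta_i=0$, and in the linear/equal-mean special case one computes $\phi(x_i^\eta,\hat a_i,y_i^\eta)=\delta_i-\tfrac12\gamma^\star_\eta\|\rho\|^2\lambda(\hat a_i)^2$, which is not $-\eta\delta_i$ pointwise even though the sum does equal $-\eta\sum_i\delta_i$; so the paper's pointwise claim is stronger than what is true, while your envelope argument proves exactly what is needed. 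The price of your approach is the technical overhead you already flag (differentiability of $G$ at $\gamma^\star_\eta$, uniqueness of the inner argmin, interiority of the maximizer), whereas the paper's two-line argument avoids this only by overclaiming. One further observation, which concerns the paper rather than your argument: both derivations move the metric from $\bar\delta$ to $-\eta\bar\delta$ (a sign flip with magnitude scaled by $\eta$), which is consistent with the stated objective $\mathcal{T}_\eta$ but sits awkwardly with the informal reading ``reduced by $\eta$'' and with \Cref{cor:optimal-perturbation-special}, whose interpolation $x_i^\eta=\hat x_i-\tfrac12\eta\gamma^*\rho^T\lambda(\hat a_i)$ instead yields $(1-\eta)\bar\delta$; your derivation makes this internal discrepancy visible rather than introducing an error.
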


% \begin{SCfigure}
%     \centering
%     \includegraphics[width=0.4\linewidth]{figs/projection.png}
%     \caption{Optimal data perturbation. Starting from $\hat{\mathbb{P}}^N$, perturbed distributions $\{(x_i^\eta, \hat{a}_i, y_i^\eta)\}$ move along the projection path toward the fairness set $\mathcal{F}_R$, with $\eta=1$ achieving full fairness.}
%     \label{fig:placeholder}
% \end{SCfigure}

\RV{\Cref{fig:perturbation_illustration} provides an illustration of \Cref{thm:optimal-perturbation}}, with proof in~\Cref{sec:proofs}. Using the theorem, we use $\mathcal{R}(x_i^\eta)$ to get the perturbed model predictions. In a special case of linear regression and equal mean fairness criterion, we have closed-form solutions for $x_i^\eta$ using the following corollary, with proof in~\Cref{sec:proofs}) and validation example  in~\Cref{sec:optimal-perturbation-special-case}..
\begin{corollary}[Optimal data perturbation for the special case]
     Suppose $\mathcal{R}(x) = \rho^T x+\sigma$ (i.e., linear regression) and $d(y, \hat{y}) = \hat{y}$ (i.e., equal mean criterion), and $\alpha = 1$. Then $x_i^\eta$ and $y_i^\eta$ in~\Cref{thm:optimal-perturbation} is given by $x_i^\eta = \hat{x}_i - \frac{1}{2}\eta\rho^T\gamma^* \lambda(\hat{a}_i)$ and $y_i^\eta = \hat{y}_i$, where $$\gamma^* = \frac{2\rho^T \sum_{i=1}^N \lambda(\hat{a}_i) \hat{x}_i + 2\sigma \sum_{i=1}^N \lambda(\hat{a}_i)}{\|\rho\|^2 \sum_{i=1}^N \lambda(\hat{a}_i)^2}.$$ 
     \label{cor:optimal-perturbation-special}
\end{corollary}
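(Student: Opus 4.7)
The plan is to specialize the optimal data perturbation theorem to the linear-regression and equal-mean setting and to solve the resulting nested optimization in closed form via two successive first-order conditions.

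First, I would observe that with $d(Y,\mathcal{R}(X)) = \mathcal{R}(X) = \rho^T X + \sigma$, the score function reduces to $\phi(x,a,y) = \lambda(a)(\rho^T x + \sigma)$, which does not depend on $y$. Consequently the inner objective decouples in $y_i$: only the transport cost $(\alpha\|x_i-\hat x_i\| + \beta|y_i-\hat y_i|)^2$ depends on $y_i$, so with $\alpha=1$ it is minimized at $y_i^\eta = \hat y_i$, and the cost collapses to $\|x_i-\hat x_i\|^2$. This pins down $y_i^\eta$ immediately.

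Second, with $y_i$ fixed at $\hat y_i$, the remaining $x_i$-problem
$$\inf_{x_i \in \mathcal{X}}\; \gamma \lambda(\hat a_i)(\rho^T x_i + \sigma) + \gamma\eta\delta_i + \|x_i-\hat x_i\|^2$$
is a strictly convex quadratic. Setting $\nabla_{x_i}$ to zero gives the closed-form inner minimizer $x_i(\gamma) = \hat x_i - \tfrac{1}{2}\gamma \rho\,\lambda(\hat a_i)$. Substituting this back, the per-sample inner optimal value simplifies to a linear-minus-quadratic expression in $\gamma$ whose coefficients involve $\delta_i = (\rho^T\hat x_i + \sigma)\lambda(\hat a_i)$ and $\lambda(\hat a_i)^2$.

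Third, summing over $i$ produces a one-dimensional concave quadratic in $\gamma$ of the schematic form $C_1\,\gamma - \tfrac{1}{4}\|\rho\|^2 \bigl(\sum_i \lambda(\hat a_i)^2\bigr)\gamma^2$, where $C_1$ is a linear combination of $\sum_i \lambda(\hat a_i)\hat x_i$ and $\sum_i \lambda(\hat a_i)$, together with the shift contributed by the $\gamma\eta\delta_i$ terms. A second first-order condition in $\gamma$ then extracts $\gamma^*$, and substituting $\gamma^*$ back into $x_i(\gamma)$ yields the advertised formula for $x_i^\eta$.

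The main bookkeeping challenge is tracking the $\gamma\eta\delta_i$ contribution cleanly: although it is constant in $(x_i,y_i)$ and does not affect the inner minimizer $x_i(\gamma)$, it shifts the linear coefficient of the outer $\gamma$-problem and is precisely what determines whether the prefactor of $\gamma^* \rho\,\lambda(\hat a_i)$ in $x_i^\eta$ comes out as $\tfrac{1}{2}\eta$ rather than $\tfrac{1}{2}$ or $\tfrac{1}{2}(1+\eta)$. Once this shift is handled consistently with the intended interpretation that the fairness metric is scaled by $(1-\eta)$, the remainder of the argument is pure algebra on two nested quadratic first-order conditions.
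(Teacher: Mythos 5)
Your route is genuinely different from the paper's. The paper does not redo the nested optimization with the $\eta\delta_i$ term at all: it recalls from the proof of \Cref{cor:dual} that the full projection has inner minimizer $x_i^* = \hat x_i - \tfrac{1}{2}\gamma^*\rho\,\lambda(\hat a_i)$, asserts that by linearity of $\mathcal{R}$ and of the equal-mean constraint the Wasserstein interpolation between $\hat{\mathbb{P}}^N$ and its projection is linear, and simply scales the displacement by $\eta$; $y_i^\eta=\hat y_i$ because the criterion involves no labels. Your plan instead solves the two nested first-order conditions of the $\eta$-perturbed dual directly. In principle this is the more rigorous path (the paper's ``linearity of the Wasserstein interpolation'' step is asserted rather than derived), and your treatment of $y_i^\eta$ and of the inner minimizer $x_i(\gamma)=\hat x_i-\tfrac{1}{2}\gamma\rho\,\lambda(\hat a_i)$ is correct.

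The gap is that you stop exactly at the step that decides the answer. Carrying out your outer first-order condition on the objective as written in \Cref{thm:optimal-perturbation}, with $\delta_i=\phi(\hat x_i,\hat a_i,\hat y_i)=\lambda(\hat a_i)(\rho^T\hat x_i+\sigma)$, the per-sample inner value is $(1+\eta)\gamma\delta_i-\tfrac{1}{4}\gamma^2\lambda(\hat a_i)^2\|\rho\|^2$, so the outer maximizer is $\gamma^*_\eta=(1+\eta)\gamma^*$ and you land on $x_i^\eta=\hat x_i-\tfrac{1}{2}(1+\eta)\gamma^*\rho\,\lambda(\hat a_i)$, not the claimed $\tfrac{1}{2}\eta$ prefactor. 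The stated formula---which is the one consistent with the intended semantics, since it yields $\sum_i\phi(x_i^\eta,\hat a_i,y_i^\eta)=(1-\eta)\sum_i\delta_i$ and hence full fairness at $\eta=1$---is what your calculation produces only if the perturbation term is $-(1-\eta)\delta_i$ rather than $+\eta\delta_i$. You explicitly flag this as ``the main bookkeeping challenge'' and defer to ``the intended interpretation,'' but that is precisely what a proof must settle: either you derive the corollary from the theorem's objective as literally stated (and then obtain the $(1+\eta)$ factor, contradicting the corollary), or you must first fix the sign convention in the perturbation term. The paper's interpolation argument silently sidesteps this because it never evaluates the $\eta\delta_i$ term; your more honest computation exposes the inconsistency but does not resolve it.
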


\section{Implementation}
We implement the framework in Python using NumPy, SciPy, and scikit-learn. The nested optimization in Theorem~\ref{thm:dual} is solved using BFGS\citep{dai2002convergence} for the inner minimization and bounded scalar optimization~\citep{nocedal1999numerical} for the outer maximization over $\gamma$ within a bounded range determined empirically and a tolerance of $10^{-3}$. We set cost function weights $\alpha = 1$ and $\beta = 0$ for equal mean criterion or $\beta = 1$ for squared error criterion. The gradients in the p-value computation in~\Cref{thm:limiting} are approximated via SHAP values~\citep{lundberg2017unified} for tree-based models and finite differences for SVR~\citep{nocedal1999numerical}. Data and code are available at: \url{https://anonymous.4open.science/r/Wasserstein-projection-0FD7/}.

\section{Synthetic experiments to benchmark Wasserstein projection-based fairness testing}
\begin{figure*}[t]
    \centering
    \includegraphics[width=0.75\linewidth]{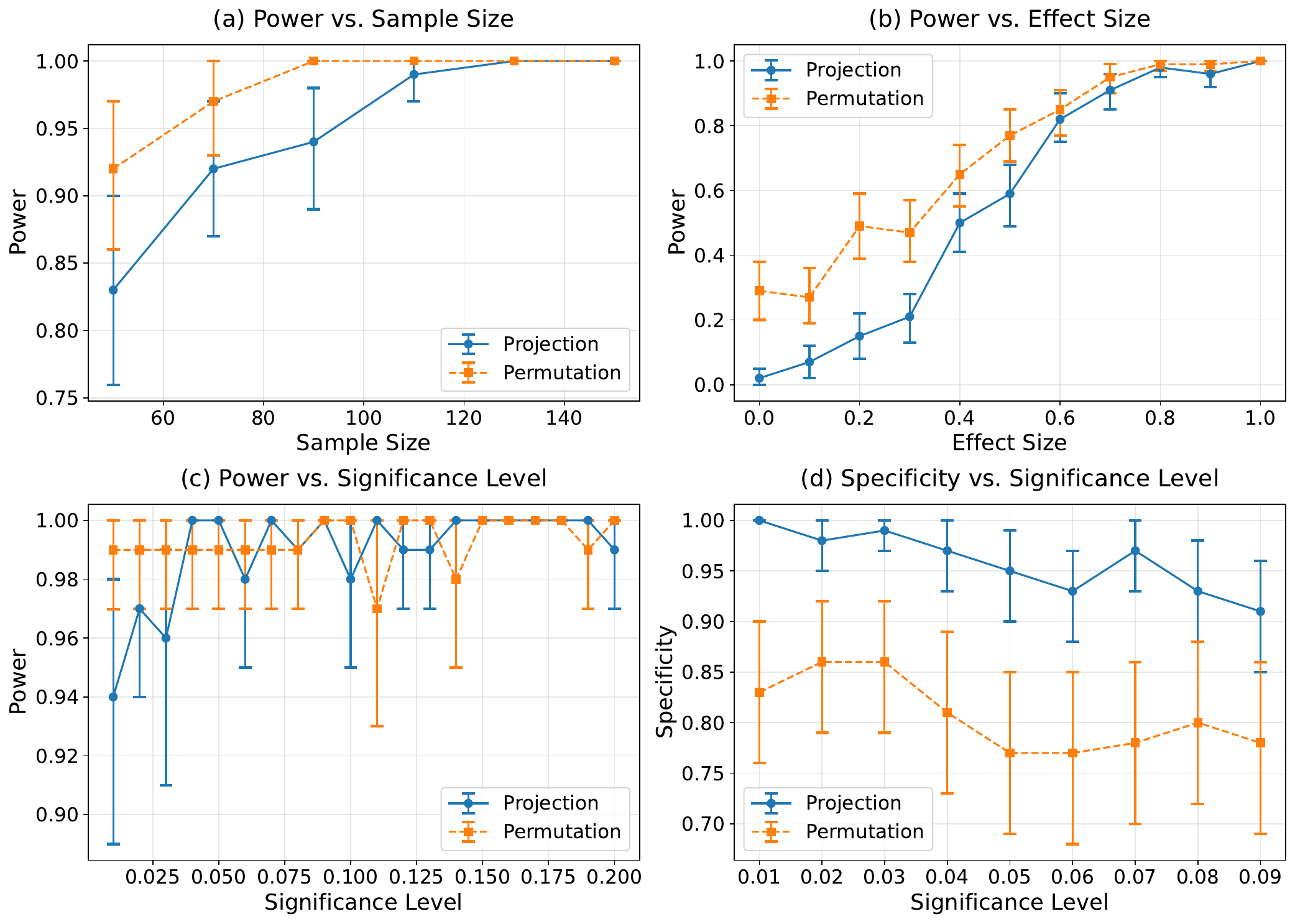}
    \caption{\RV{Visualization of simulations to examine the relationship between (a) Power vs. Sample size, (b) Power vs. Effect size, (c) Power vs. Significance level and (d) Specificity vs. Significance level. The bars denote 95\% CI.}}
    \label{fig:power-specificity}
\end{figure*}

We benchmarked Wasserstein projection-based fairness testing against permutation-based testing under linear regression with the equal mean criterion.
We evaluated power, specificity \RV{and their 95\% confidence intervals (CI) using non-parametric bootstrapping} across varying sample sizes, effect sizes, and significance levels.
Power refers to the test's ability to correctly reject a false null hypothesis, and is influenced by factors such as effect size (e.g.,  the magnitudes of differences in model outcomes across demographic groups in fairness testing), sample size, and the chosen significance level~\citep{lakens2013calculating,cohen2013statistical,ellis2010essential}. In contrast, specificity concerns the probability of correctly failing to reject a true null hypothesis,  and is primarily governed by the significance level~\citep{lehmann2005testing}.

\begin{table}[h]
\centering
\caption{Comparison of average computation time per experiment in seconds between Wasserstein projection-based tests and permutation tests in various simulation setups.}
\label{tab:timing_comparison}
\begin{tabular}{lcc}
\toprule
Setup & Wasserstein  (s) & Permutation (s)  \\
\midrule
Power vs Effect Size & 2.66 & 38.02  \\
Power vs Sample Size & 2.65 & 34.11  \\
Power vs Significance Level & 2.69 & 38.92  \\
Specificity vs Significance Level & 2.64 & 32.62 \\
\midrule
\textbf{Overall Average} & \textbf{2.66} & \textbf{35.92}  \\
\bottomrule
\end{tabular}
\end{table}

\begin{table*}[!htbp]
\small
\centering
\begin{tabular}{|l|ccc|ccc|}
\hline
\textbf{Model} & \multicolumn{3}{c|}{\textbf{Math}} & \multicolumn{3}{c|}{\textbf{Portuguese}} \\
\hline
 & relative MAE & mean difference & p-value &   relative MAE & mean difference  & p-value \\
\hline
Linear & 0.29 & -0.09 & 0.71   & 0.16 & 0.38 & \HL{0.02}   \\ \hline
Lasso & 0.32 & -0.02 & 0.71   & 0.20 & -0.07  & \HL{0.01}  \\ \hline
Ridge & 0.29 & -0.09 & 0.72   & 0.16 & 0.38 & \HL{0.02}   \\ \hline
SVR Linear  & 0.28 & -0.10 & 0.65  & 0.15 & 0.33 & \HL{0.03}   \\ \hline
SVR RBF & 0.31  & 0.04 & 0.65  & 0.17 & 0.25 & \HL{0.00}   \\ \hline
SVR Poly & 0.30 & 0.17 & 0.08    & 0.16 & 0.32 & \HL{0.00}  \\ 
\hline
\end{tabular}
\caption{Relative MAEs, mean differences and p-values between predicted means between females and males from different models on predicting Math and Portuguese final grades. P-values that are below the significance level (0.05) are highlighted in red.}
\label{tab:math_portuguese}
\end{table*}

\RV{In our simulation setups (see~\Cref{sec:simulation-details} for details), our method \RV{converges to the full power with} permutation testing in power once sample size exceeds 130 (\Cref{fig:power-specificity}(a)) and when effect size exceeds 0.8 (\Cref{fig:power-specificity}(b)). For \RV{smaller sample sizes and effect sizes}, permutation testing performs slightly better. With increasing significance levels, both methods achieve similar power (\Cref{fig:power-specificity}(c)), but our method consistently yields higher specificity across all levels (\Cref{fig:power-specificity}(d)). Across all experiments, the CIs for our approach are on par with permutation-based testing. Overall, our approach achieves competitive power and superior specificity compared to permutation-based testing. Moreover, in~\Cref{tab:timing_comparison}, our method demonstrates substantially better computational efficiency, with an average computation time of 2.66 seconds per experiment compared to 35.92 seconds for permutation testing across all our setups, representing approximately a 13.5-fold speedup.}

\section{A case study on student performance dataset for fairness testing}
The Student Performance dataset~\cite{cortez2008using}, collected from Portuguese secondary schools, is a common benchmark in educational data mining. It includes 649 observations with 33 variables covering demographics (e.g., age, gender), social factors (e.g., family size, parental education), and academic indicators (e.g., study time, prior grades). The response variables are final Math and Portuguese grades, scored from 0 to 20. Previous work has used both interpretable models such as Decision Trees and Linear Regression~\cite{cortez2008using}, and more complex models like Random Forests, Support Vector Machines, and k-Nearest Neighbors to capture nonlinear patterns~\cite{el2019multiple}. Motivated by evidence that performance in Math and Portuguese may reflect gender-related influences such as social expectations and teacher bias~\citep{stoet2013sex, else2010cross}, we evaluate the accuracy of several regression models and assess their fairness with respect to gender using the equal mean criterion.

% Introduce experiment set up 
To set up the experiments, we partitioned the data into two groups by the students' genders. For model training, we removed variables $G_1$ (first period grade) and $G_2$ (second period grade) since we wanted to focus on $G_3$ (final grade) prediction, and gender as it is the sensitive attribute. We trained various regression models including  Linear Regression, Lasso Regression, Ridge Regression, and Support Vector Regression (SVR) with linear, Radial Basis Function (RBF), and polynomial kernels to predict the final from the remaining explanatory variables. 

% Results
\RV{We began with a descriptive comparison of predicted outcomes across genders. As summarized in Table~\ref{tab:math_portuguese}, for Math final grades, the predicted mean differences between female and male students are small across all models, ranging from $-0.10$ to $0.17$, with no consistent directional pattern. In contrast, for Portuguese final grades, the predicted means for female students are generally higher than those for male students across most models, with mean differences ranging from $0.25$ to $0.38$. The only exception is Lasso Regression, where the mean difference is slightly negative ($-0.07$). This observation suggests that, compared to Math, predictions for Portuguese final grades tend to be more favorable to female students across a wide range of regression models.}
\RV{To assess whether these observed differences should be considered statistically unfair, we applied our Wasserstein projection-based hypothesis test under the equal mean criterion. \Cref{tab:math_portuguese} shows the corresponding $p$-values. For Math final grades, all models yield $p$-values above 0.05, indicating no statistically significant violations of the equal mean criterion. For Portuguese final grades, however, all models produce $p$-values below 0.05, indicating statistically significant violations of the equal mean criterion. 
In terms of predictive accuracy, Table~\ref{tab:math_portuguese} shows that all models achieve lower relative mean absolute errors (MAEs) for Portuguese final grades (ranging from 0.16 to 0.20) than for Math final grades (ranging from 0.28 to 0.32),  indicating better predictive performance on Portuguese. Together, these results highlight a trade-off between accuracy and fairness across fairness and accuracy.}

\section{A case study on Boston Housing dataset for \RV{fairness} testing and data perturbation}

The Boston Housing Dataset~\citep{harrison1978hedonic} is a benchmark in machine learning and statistics, originally collected to study housing prices in the Boston metropolitan area. It includes 506 observations with 14 variables such as crime rate, average rooms per dwelling, property tax rate, and proportion of Black residents. The outcome is the median value of owner-occupied homes (in \$1000s). While early studies relied on linear regression for interpretability~\citep{harrison1978hedonic}, later work adopted polynomial, Ridge, and Lasso regression to capture nonlinear effects~\citep{sanyal2022boston,li2024comparative}, and more recent research has shown that tree-based and ensemble methods such as Random Forests, Gradient Boosting, and XGBoost can achieve higher accuracy~\citep{xia2024boston,li2024comparative}. Motivated by the original finding that buyers pay a premium for clean air~\citep{harrison1978hedonic}, we investigate two questions: \RV{(i) whether predicted housing prices are statistically significantly different between areas with high and low pollution under different fairness criteria and different division points of the sensitive attribute, and (ii) which features drive the fairness behavior of the model, either compensating for high pollution or being undervalued in such areas. We emphasize that ``fairness'' in this context refers to statistical fairness as defined by the adopted criteria, rather than sociological or normative notions of fairness.}
\begin{table}[t]
\small
\centering
\begin{tabular}{|c|c|c|}
\hline  
\textbf{NOX Percentile} & \textbf{Equal mean p-value} & \textbf{Squared error p-value}\\
\hline
25th & 2.76e-12 & 9.50e-85 \\ \hline 
50th & 2.91e-18 & 6.03e-104 \\ \hline
75th & 2.33e-13 & 7.35e-128\\ \hline
\end{tabular}
\caption{\RV{Fairness testing by different division points of NOX.}}
\label{tab:fairness-testing-NOX}
\end{table}

\begin{table*}[t]
\small
\centering
\begin{tabular}{|l|c|c|}
\hline
\textbf{Ranked Features} & \textbf{High NOX} & \textbf{Low NOX} \\
\hline
lstat (\% lower status of the population) & -0.156 & 0.161\\
\hline
dis (weighted distances to five Boston employment centres) & -0.016 & 0.016 \\
\hline
chas (Charles River dummy variable (1 if tract bounds river; 0 otherwise) & 0.007 & -0.007\\
\hline
rad (index of accessibility to radial highways) & 0.006 & -0.006\\
\hline
ptratio (pupil-teacher ratio by town) & -0 & 0\\
\hline
crim (per capita crime rate by town) & -0 & 0\\
\hline
rm (average number of rooms per dwelling) & 0& -0\\
\hline
indus (proportion of non-retail business acres per town) & -0& 0\\
\hline
tax (full-value property-tax rate per \$10,000) & -0& 0 \\
\hline
age (proportion of owner-occupied units built prior to 1940) & -0 & 0 \\
\hline
zn (proportion of residential land zoned for lots over 25,000 sq.ft.) & 0& -0\\
\hline
\end{tabular}
\caption{Ranked features for high NOX and low NOX areas.}
\label{tab:ranked_features}
\end{table*}

To set up the experiments, \RV{we treated nitric oxide concentration (NOX) as the sensitive attribute and partitioned the data into two groups according to a chosen division point. For a given threshold, areas with NOX levels above the division point are classified as high-NOX areas, while those below the division point are classified as low-NOX areas. We considered multiple candidate division points corresponding to different percentiles of the NOX distribution.} We adapted the linear regression model proposed in the original study associated with the dataset~\citep{harrison1978hedonic}, \RV{as its additive structure allows the feature-level effects of the optimal data perturbation to be directly interpreted for (ii)}. Specifically, for model training, we removed the variables NOX as it is the sensitive attribute, and B (i.e., $1000(Bk - 0.63)^2$ where Bk is the proportion of blacks by town) due to concerns about its data transformation~\citep{Carlisle_2020}. We retained the remaining data transformation following Equation A.1 in~\citep{harrison1978hedonic}, and re-estimated the coefficients using least squares regression to predict the median value of owner-occupied homes. We used 0.05 as our significance level throughout the analysis.

To address (i), we applied our Wasserstein projection-based hypothesis test to \RV{the fitted linear regression model using multiple NOX division points and two expectation-based fairness criteria: the equal mean criterion and the squared error criterion. Table~\ref{tab:fairness-testing-NOX} reports the resulting $p$-values for NOX division points at the 25th, 50th (median), and 75th percentiles. Across all division points, both fairness criteria yield extremely small p-values, well below the significance level of 0.05, indicating statistically significant differences between high-NOX and low-NOX areas. The results are qualitatively consistent across division points, suggesting that the detection of unfairness is robust to the choice of NOX division threshold.}

To address (ii), we applied our model perturbation procedure proposed in~\Cref{cor:optimal-perturbation-special}  with \RV{the equal mean fairness criterion and} $\eta=1$, meaning we fully enforce equal mean predictions across the high-NOX and low-NOX areas. In~\Cref{tab:ranked_features}, we ranked features by the magnitudes of the product between the absolute values of the regression coefficients and the changes (in their transformed form) resulting from data perturbation. \RV{This quantity captures the extent and the direction to which that feature must be adjusted to achieve fairness.} We interpret these ranked features from two perspectives. 
First, the features that most compensate for residing in high-NOX areas, in descending order of contribution, are: (1) a decrease in \texttt{lstat} (percentage of lower-status population), (2) a decrease in \texttt{dis} (distance to employment centers), (3) an increase in \texttt{chas} (proximity to the Charles River), and (4) an increase in \texttt{rad} (accessibility to radial highways). Second, for low-NOX areas, these same features appear to be undervalued, in the same ranked order. In addition, among the top four contributing features, three (\texttt{dis}, \texttt{chas}, and \texttt{rad}) are spatial variables, and one (\texttt{lstat}) is a socioeconomic variable. This suggests that both socioeconomic status and spatial accessibility play roles in \RV{shaping the model's behavior} between high- and low-NOX areas.

% \begin{table*}[!htbp]
% \small
% \centering
% \begin{tabular}{|l|c|c|}
% \hline
% \textbf{Ranked Features} & \textbf{High NOX} & \textbf{Low NOX} \\
% \hline
% lstat (\% lower status of the population) & -0.156 & 0.161\\
% \hline
% dis (weighted distances to five Boston employment centres) & -0.016 & 0.016 \\
% \hline
% chas (Charles River dummy variable (1 if tract bounds river; 0 otherwise) & 0.007 & -0.007\\
% \hline
% rad (index of accessibility to radial highways) & 0.006 & -0.006\\
% \hline
% ptratio (pupil-teacher ratio by town) & -0 & 0\\
% \hline
% crim (per capita crime rate by town) & -0 & 0\\
% \hline
% rm (average number of rooms per dwelling) & 0& -0\\
% \hline
% indus (proportion of non-retail business acres per town) & -0& 0\\
% \hline
% tax (full-value property-tax rate per \$10,000) & -0& 0 \\
% \hline
% age (proportion of owner-occupied units built prior to 1940) & -0 & 0 \\
% \hline
% zn (proportion of residential land zoned for lots over 25,000 sq.ft.) & 0& -0\\
% \hline
% \end{tabular}
% \caption{Ranked features for high NOX and low NOX areas}
% \label{tab:ranked_features}
% \end{table*}

\section{Conclusion}
In this paper, we presented the use of Wasserstein projection distance for fairness testing and optimal data perturbation on regression models under expectation-based fairness criteria. 
By extending previous work on fairness in classification to the regression setting, we addressed a significant gap in fairness evaluation for continuous prediction tasks. 
\RV{Theoretically, we developed a rigorous statistical testing framework by categorizing regression fairness criteria into expectation-based and distribution-based classes, formulating the Wasserstein projection distance as a test statistic, deriving its dual formulation to enable finite-dimensional optimization and identifying special cases with analytical solutions, and establishing its asymptotic behavior through distributional upper bounds and limiting distributions that facilitate p-value computation.}
\RV{Through synthetic experiments, we demonstrated that our Wasserstein projection-based test achieves competitive power and consistently superior specificity compared to permutation-based testing. Case studies on the Student Performance dataset revealed statistically significant gender-based disparities in Portuguese grade predictions but not in Math grade predictions. On the Boston Housing dataset, we uncovered significant unfairness in predicted home prices between high- and low-pollution areas that remained robust across different percentile-based division points. Optimal data perturbation applied to the housing predictions identified spatial features, including employment center distance, river proximity, and highway access, along with socioeconomic factors such as lower-status population percentage, as the primary contributors to model unfairness.}

% Our experiments on publicly available data datasets show that our framework can reveal statistically significant gender-based disparities, and uncover significant disparities in predicted home prices between high- and low-pollution areas while identifying spatial and socioeconomic features that most contributed to \RV{the unfairness} in the model.

In the future, we plan to pursue the following directions. First, improving the computational efficiency of the fairness metric and test statistic computation remains an important challenge, especially for large-scale or non-linear models. Second, we aim to extend our framework to support a broader class of fairness criteria, including average ratio-based definitions (see~\Cref{sec:other-criteria}). Finally, we intend to apply our methods to real-world regression problems with fairness implications, such as salary prediction across genders~\citep{crothers2010gender}, to further evaluate practical impact and relevance.

\newpage
\bibliographystyle{abbrvnat}  
\bibliography{reference}

\newpage
{\Large Appendices}

\appendix
\section{Proofs}
Before proving~\Cref{thm:marginal}, we prove the following lemma. 
\label{sec:proofs}
\begin{lemma}[Projection with marginal constraints]
\label{lemma:marginal}
Suppose $\mathbb{Q} \in \mathcal{F}_\mathcal{R}$ satisfies $W_c^2(\hat{\mathbb{P}}^N, \mathbb{Q}) < \infty$, then $\mathbb{Q} \in \mathcal{F}_\mathcal{R}(\hat{p}^N).$
\end{lemma}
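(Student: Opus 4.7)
The plan is to exploit the structure of the cost function in Equation~\eqref{eq:cost-function}, specifically the infinite-cost term $\infty\cdot|a-a'|$ that penalizes any mass transport across different sensitive-attribute values. The finiteness assumption $W_c^2(\hat{\mathbb{P}}^N,\mathbb{Q})<\infty$ then forces the $A$-marginal of $\mathbb{Q}$ to coincide with the $A$-marginal of $\hat{\mathbb{P}}^N$, which is precisely $\hat{p}^N$.

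First, I would unpack the definition of $W_c^2$ as an infimum over couplings $\pi\in\Pi(\hat{\mathbb{P}}^N,\mathbb{Q})$. By the finiteness assumption, there exists a sequence of couplings $\pi_k$ with $\int c\,d\pi_k<\infty$; in fact one can pick any near-optimal coupling. Since $c((x,a,y),(x',a',y'))=+\infty$ whenever $a\neq a'$, any coupling with finite integrated cost must satisfy $\pi_k\bigl(\{(z,z'):a\neq a'\}\bigr)=0$. Equivalently, the coupling is supported on the diagonal in the $A$-coordinate.

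Next, I would use this support property to deduce that the pushforward of $\pi_k$ onto the $A$-coordinate of either marginal is the same measure. Concretely, for each $a\in\mathcal{A}$,
\[
\mathbb{Q}(A=a) \;=\; \pi_k(A'=a) \;=\; \pi_k(A=a,\,A'=a) \;=\; \pi_k(A=a) \;=\; \hat{\mathbb{P}}^N(A=a) \;=\; \hat{p}^N_a.
\]
Thus the $A$-marginal of $\mathbb{Q}$ equals $\hat{p}^N$. Since by hypothesis $\mathbb{Q}\in\mathcal{F}_\mathcal{R}$ (it satisfies the fairness constraint), and it additionally respects the marginal constraint on $A$, we conclude $\mathbb{Q}\in\mathcal{F}_\mathcal{R}(\hat{p}^N)$.

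The only subtle point, which is mostly a bookkeeping issue rather than a real obstacle, is to justify using $+\infty$ as a numerical cost inside an integral. The cleanest way is to adopt the convention $0\cdot\infty=0$ and interpret the cost as a penalty that is almost surely finite under any admissible coupling; alternatively one may approximate by $M|a-a'|$ and let $M\to\infty$, obtaining the same support conclusion. Once this convention is fixed, the argument reduces to the short calculation above; there is no hard analytic step.
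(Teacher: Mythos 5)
Your proof is correct and rests on the same key observation as the paper's: the $\infty\cdot|a-a'|$ term in the cost forces any finite-cost coupling to be supported on $\{a=a'\}$, so the $A$-marginals of $\hat{\mathbb{P}}^N$ and $\mathbb{Q}$ must coincide with $\hat{p}^N$. The paper reaches the same conclusion by contradiction (decomposing the coupling into the conditionals $\mathbb{Q}_i$ given each empirical atom and exhibiting an atom $i^*$ that must send positive mass across groups, yielding infinite cost), whereas your direct marginal computation on a finite-cost coupling is a slightly cleaner rendering of the identical idea.
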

\begin{proof}[Proof for Lemma~\ref{lemma:marginal}] 
As the fairness constraints are the same in $\mathcal{F}_\mathcal{R}$ and $\mathcal{F}_\mathcal{R}(\hat{p}^N)$, it suffices to verify that $\mathbb{Q} \in \mathcal{F}_\mathcal{R}$ satisfies the marginal constraint $\forall a \in \mathcal{A}, \mathbb{Q}(A = a) = \hat{p}_a^N$.

By definition of $W_c^2$, there exists a coupling $\pi$ such that
$$W_c^2(\hat{\mathbb{P}}^N, \mathbb{Q}) = \mathbb{E}_\pi [(\alpha\|X' - X\| + \infty \|A' - A\| + \beta\|Y' - Y\|)^2],$$
where the marginals of $\pi$ are $\hat{\mathbb{P}}^N$ and $\mathbb{Q}$. 

We prove by contradiction. Let $\mathbb{Q}_i$ denote the conditional distribution of $(X,A,Y)$ given $(X',A',Y') = (\hat{x}_i, \hat{a}_i, \hat{y}_i)$ where represent the observed feature, sensitive attribute, and label, respectively, in the empirical data.

Suppose that there exists $a$ such that $\mathbb{Q}(A = a) \neq \hat{p}_a^N$. Without the loss of generality, we suppose $\mathbb{Q}(A = a) > \hat{p}_a^N$, which means
$$\frac{1}{N} \sum_{i=1}^N \mathbb{Q}_i(A = a) > \frac{1}{N} \sum_{i=1}^N \mathbbm{1}_a (\hat{a}_i).$$

This implies there exists $i^* \in [N]$ with $\hat{a}_{i^*} \neq a$ and $\mathbb{Q}_{i^*}(A = a) > 0$, such that
\begin{align*}
 W_c^2(\hat{\mathbb{P}}^N, \mathbb{Q}) &= \frac{1}{N} \sum_{i=1}^N \mathbb{E}_{\mathbb{Q}_i}[(\alpha\|X' - X\| + \infty \|A' - A\| + \beta\|Y' - Y\|)^2] \\
 & \geq \frac{1}{N} \mathbb{E}_{\mathbb{Q}_{i^*}} [(\alpha\|\hat{x}_{i^*} - X\| + \infty \|\hat{a}_{i^*} - A\| + \beta\|\hat{y}_{i^*} - Y\|)^2] \\
 & \geq \frac{1}{N} \mathbb{Q}_{i^{*}} (A = a) (\infty (\hat{a}_{i^*} - a))^2 \\
 & = \infty,
\end{align*}
which contradicts with $W_c^2(\hat{\mathbb{P}}^N, \mathbb{Q}) < \infty.$
    
\end{proof}

\begin{proof}[Proof for~\Cref{thm:marginal}]
Assume $W_c^2(\hat{\mathbb{P}}^N, \mathbb{Q}) \leq \infty$, Lemma \ref{lemma:marginal} implies $\mathcal{F}_\mathcal{R} \subseteq \mathcal{F}_\mathcal{R}(\hat{p}^N)$.

Besides $\mathcal{F}_\mathcal{R}(\hat{p}^N) \subseteq \mathcal{F}_\mathcal{R}$, since $\mathcal{F}_\mathcal{R}(\hat{p}^N)$ marginal constraints are stronger than those of $\mathcal{F}_\mathcal{R}$. 

Hence $\mathcal{F}_\mathcal{R}(\hat{p}^N) = \mathcal{F}_\mathcal{R}$, and $\mathop{inf}\limits_{\mathbb{Q} \in \mathcal{F}_\mathcal{R}} W_c^2(\hat{\mathbb{P}}^N, \mathbb{Q}) = \mathop{inf}\limits_{\mathbb{Q} \in \mathcal{F}_\mathcal{R}(\hat{p}^N)} W_c^2(\hat{\mathbb{P}}^N, \mathbb{Q}).$
\end{proof}

% TODO: check the recall of this one, with expectation 
Before proving~\Cref{thm:dual}, we present the following lemma. 

\begin{lemma}[Interior point]
\label{lemma:slater}
% Let $\Xi = \mathcal{X} \times \mathcal{A} \times \mathcal{Y}$, $\hat{\Xi}^N = \{(\hat{x}_i, \hat{a}_i, \hat{y}_i): i \in [N]\}$, $\xi \text{(A generic population point)}\in \Xi $, $\xi' \text{(A variable that ranges over the empirical dataset)} \in \hat{\Xi}^N$, and 

Let $\Xi = X \times A \times Y$ denote the population space, and let 
\[
\hat{\Xi}^N = \{ \hat{\xi}_i = (\hat{x}_i, \hat{a}_i, \hat{y}_i) : i \in [N] \}
\]
be the empirical dataset consisting of $N$ observed samples. 
We distinguish two kinds of variables: 
\begin{itemize}
    \item $\xi = (x,a,y) \in \Xi$, a generic population element, 
    \item $\xi' \in \hat{\Xi}^N$, a variable ranging over the empirical dataset.
\end{itemize}

We define $f: \Xi \times \hat{\Xi}^N \rightarrow \mathbb{R}^{N+1}$ by
$$f(\xi, \xi') = (\mathbbm{1}_{\hat{\xi}_1}(\xi'), ..., \mathbbm{1}_{\hat{\xi}_N}(\xi'),\phi(\xi)),$$
where $$\phi(X,A,Y) = \frac{d(Y, \mathcal{R}(X)) \mathbbm{1}_1(A)}{\mathbb{Q}(A=1)} - \frac{d(Y, \mathcal{R}(X)) \mathbbm{1}_0(A)}{\mathbb{Q}(A=0)}, \text{ and}$$
$$\hat{\xi}_i = (\hat{x}_i, \hat{a}_i, \hat{y}_i).$$
Then we have 
$$\bar{q} = (\underbrace{\frac{1}{N}, ..., \frac{1}{N}}_{N}, 0)  \in \mathop{int}\{\mathbb{E}_\pi[f(\xi, \xi')]: \pi \in \mathcal{M}_+(\Xi \times \hat{\Xi}^N)\},$$
where $\mathop{int}(\cdot)$ denotes the interior (for the Euclidean topology). 
\end{lemma}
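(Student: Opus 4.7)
The plan is to identify
$$\mathcal{A}_N := \{\mathbb{E}_\pi[f(\xi,\xi')] : \pi \in \mathcal{M}_+(\Xi \times \hat{\Xi}^N)\}$$
as a convex cone in $\mathbb{R}^{N+1}$ and then realize $\bar{q}$ as a combination of a finite spanning subfamily of its generators with strictly positive coefficients; the standard convex-analysis fact that any such point lies in the interior of the cone then completes the argument.

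First I would disintegrate $\pi$ along the finite second factor: any $\pi \in \mathcal{M}_+(\Xi \times \hat{\Xi}^N)$ decomposes as $\pi = \sum_{i=1}^N \pi_i \otimes \delta_{\hat{\xi}_i}$ with $\pi_i \in \mathcal{M}_+(\Xi)$, and approximating each $\pi_i$ by atomic measures gives
$$\mathbb{E}_\pi[f(\xi,\xi')] = \sum_{i=1}^N \sum_k a_{i,k}\bigl(e_i,\phi(\xi_{i,k})\bigr), \qquad a_{i,k}\geq 0,$$
where $e_i \in \mathbb{R}^N$ is the $i$-th standard basis vector. Hence $\mathcal{A}_N$ coincides with the conic hull of $G := \{(e_i,\phi(\xi)) : i \in [N],\ \xi \in \Xi\} \subset \mathbb{R}^{N+1}$.

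Next I would exploit the sign structure of $\phi$. Under the standing non-degeneracy that $d(Y,\mathcal{R}(X))$ can take strictly positive values on samples of both groups, I can choose $\xi^+ \in \Xi$ with $A=1$ and $d>0$, for which $c^+ := \phi(\xi^+) > 0$, and $\xi^- \in \Xi$ with $A=0$ and $d>0$, for which $c^- := \phi(\xi^-) < 0$. The $2N$ vectors $\{(e_i,c^+),(e_i,c^-) : i\in[N]\} \subset G$ span $\mathbb{R}^{N+1}$, since their pairwise differences produce the direction $(0,c^+-c^-)$ and appropriate combinations then recover each $(e_i,0)$. Solving $\alpha^+ + \alpha^- = 1/N$ jointly with $\alpha^+ c^+ + \alpha^- c^- = 0$ yields $\alpha^+ = -c^-/[N(c^+-c^-)] > 0$ and $\alpha^- = c^+/[N(c^+-c^-)] > 0$, so
$$\bar{q} = \sum_{i=1}^N \bigl[\alpha^+(e_i,c^+) + \alpha^-(e_i,c^-)\bigr]$$
is a strictly positive combination of this spanning family.

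A small perturbation $\bar{q} + \varepsilon$ can then be absorbed into $O(\|\varepsilon\|)$ adjustments of the $2N$ coefficients: the affine map $(\mu_i^+,\mu_i^-)_{i=1}^N \mapsto \sum_{i=1}^N [(\alpha^+ + \mu_i^+)(e_i,c^+) + (\alpha^- + \mu_i^-)(e_i,c^-)]$ from $\mathbb{R}^{2N}$ to $\mathbb{R}^{N+1}$ is surjective, so for $\|\varepsilon\|$ small enough the preimage of $\bar{q}+\varepsilon$ contains a point with all adjusted coefficients still nonnegative, placing $\bar{q}+\varepsilon$ in $\mathcal{A}_N$ and therefore $\bar{q}$ in $\mathop{int}(\mathcal{A}_N)$. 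The main obstacle is the non-degeneracy step of locating $\xi^\pm$: without $\phi$ attaining both signs on $\Xi$, the cone $\mathcal{A}_N$ would sit in a closed half-space through the origin and the interior claim would fail. I would therefore either state the positivity of $d$ on both groups as a standing hypothesis, or verify it directly for the concrete discrepancies ($d(Y,\hat{Y})=\hat{Y}$, $|Y-\hat{Y}|$, etc.)\ considered in \Cref{sec:catogorization}.
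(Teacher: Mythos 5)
Your proof is correct, but it takes a genuinely different route from the paper's. The paper argues by brute-force construction: it writes down an explicit box $\left(\tfrac{1}{2N},\tfrac{3}{2N}\right)^N \times \left(-\tfrac{\omega}{4},\tfrac{\omega}{4}\right)$ around $\bar q$ and, for each point $q$ in it, builds a coupling $\pi$ with one atom per empirical sample, placed at group-dependent locations $(x_0,y_0)$ and $(x_1,y_1)$ chosen to solve a balance equation for the last coordinate; to solve that equation it must assume $d$ is surjective onto a bounded interval $[0,\omega]$. You instead observe that the feasible set is (contains) the conic hull of the generators $(e_i,\phi(\xi))$, write $\bar q$ as a strictly positive combination of the $2N$ spanning generators $\{(e_i,c^+),(e_i,c^-)\}$, and invoke the standard openness of a surjective linear map restricted to the positive orthant. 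Your version is shorter, and it isolates the minimal non-degeneracy actually needed --- that $\phi$ attains both signs, i.e.\ that $d$ is not identically zero (note you do not even need $d>0$ on ``both groups'': a single pair $(x,y)$ with $d(y,\mathcal{R}(x))\neq 0$ yields $c^+$ and $c^-$ by taking $a=1$ and $a=0$) --- whereas the paper's surjectivity-onto-$[0,\omega]$ hypothesis is strictly stronger and sits awkwardly with its own equal-mean example, where $d(y,\hat y)=\mathcal{R}(x)$ is typically unbounded and sign-changing. What the paper's approach buys in exchange is an explicit, quantitative neighborhood. Two small points to tighten in yours: you only need the inclusion $\mathrm{cone}(G)\subseteq\mathcal{A}_N$ (finite positive combinations of Diracs), not the claimed coincidence, since general $\pi$ only land in the closed conic hull; and, like the paper, you implicitly assume the empirical points $\hat\xi_i$ are distinct so that the first $N$ coordinates of the generators really are the standard basis vectors $e_i$.
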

\begin{proof}
    By the definition of interior point, it suffices to find an open ball $B$ centered at $\bar{q}$
    that is completely contained in $\{\mathbb{E}_\pi[f(\xi, \xi')]: \pi \in \mathcal{M}_+(\Xi \times \hat{\Xi}^N)\}$. 
    
    Let $[0, \omega]$ denote the range of the surjective function $d$. Let $B = \left(\frac{1}{2N}, \frac{3}{2N}\right)^N \times \left(-\frac{1}{4}\omega, \frac{1}{4}\omega\right).$
    Since $B$ is an open ball centered at $\bar{q}$, it suffices to show that $\forall q \in B$, there exists $\pi \in \mathcal{M}_{+}(\Xi \times \hat{\Xi}^N)$ such that $q = \mathbb{E}_\pi[f(\xi, \xi')]$. We construct $\pi$ in the following way. $\forall a \in \mathcal{A}$, we define the locations $x_a$ and $y_a$,
    $$x_a \in \mathcal{X}, y_a \in \mathcal{Y},$$ and  $\forall i \in [N]$, set $\pi$ explicitly as, 
    $$\pi(\xi = (x_{\hat{a}_i}, \hat{a}_i, y_{\hat{a}_i}), \xi' = (\hat{x}_i, \hat{a}_i, \hat{y}_i)) = q_{i},$$ and zero elsewhere. For convenience, we write $$f(\xi, \xi') = (f_1(\xi, \xi'), ..., f_{N+1}(\xi, \xi')), $$ where for $i \leq N, f_i(\xi, \xi') = \mathbbm{1}_{\hat{\xi}_i}(\xi');$ for $i=N+1, f_{N+1}(\xi, \xi') = \phi(\xi).$
    
    By this construction, $\forall i \in [N]$,
    \begin{align*}
        \mathbb{E}_{\pi}[f_i(\xi, \xi')] = \mathbb{E}_\pi[\mathbbm{1}_{\hat{\xi}_i}(\xi')] = \sum_{(\xi,\xi')} \mathbbm{1}_{\hat{\xi}_i}(\xi') \, \pi(\xi,\xi') 
= \mathbbm{1}_{\hat{\xi}_i}(\hat{\xi}_i) \cdot q_i \;+\; 
   \sum_{\xi' \neq \hat{\xi}_i} 0 \cdot \pi(\xi,\xi') 
= q_i.
    \end{align*}

    It remains to verify $\mathbb{E}_{\pi}[f_{N+1}(\xi, \xi')] = q_{N+1}$. We define the following index set $\mathcal{I}_{a} = \{i \in [N]: \hat{a}_i = a\}$. Then
    \begin{equation}
    \begin{aligned}
        \mathbb{E}_{\pi}[f_{N+1}(\xi, \xi')] &= \mathbb{E}_\pi[\phi(\xi)] \\
        & = \mathbb{E}_\pi\left[\frac{d(Y, \mathcal{R}(X)) \mathbbm{1}_1(A)}{\hat{p}_1^N} - \frac{d(Y, \mathcal{R}(X)) \mathbbm{1}_0(A)}{\hat{p}_0^N}\right]\\
        & = (\hat{p}_1^N)^{-1}\mathbb{E}_\pi\left[d(Y, \mathcal{R}(X)) \mathbbm{1}_1(A)\right] - (\hat{p}_0^N)^{-1}\mathbb{E}\left[d(Y, \mathcal{R}(X)) \mathbbm{1}_0(A)\right]\\
        &= (\hat{p}_1^N)^{-1}d(x_1, \mathcal{R}(y_1))\sum_{i \in \mathcal{I}_{1}}q_i - (\hat{p}_0^N)^{-1}d(x_0, \mathcal{R}(y_0))\sum_{i \in \mathcal{I}_{0}}q_i. 
    \end{aligned}
    \label{eq:balance}
    \end{equation}
    It remains to find the locations of $x_1$ and $x_0$ to balance~\Cref{eq:balance} to be zero.
    Assuming $\mathbb{E}_{\pi}[f_{N+1}(\xi, \xi')] = 0$, yields
    \begin{equation}
        d(\mathcal{R}(x_1), y_1) = \frac{q_{N+1}+d(\mathcal{R}(x_0), y_0)(\hat{p}_0^N)^{-1}\sum_{i\in \mathcal{I}_{0}} q_i}{(\hat{p}_1^N)^{-1}\sum_{i \in \mathcal{I}_{1}} q_i}.
        \label{eq:balance-2}
    \end{equation}
    For individual terms in~\Cref{eq:balance-2}, we have $$(\hat{p}_0^N)^{-1}\sum_{i\in \mathcal{I}_{0}} q_i < \frac{N}{|\mathcal{I}_{0}|} \times \frac{3}{2N}  \times |\mathcal{I}_{0}| = \frac{3}{2}, \textit{and}$$
    $$(\hat{p}_1^N)^{-1}\sum_{i \in \mathcal{I}_{1}} q_i > \frac{N}{|\mathcal{I}_1|}\times \frac{1}{2N} \times |\mathcal{I}_1| = \frac{1}{2}.$$
    We have the following two cases depending on the sign of $q_{N+1}$.
    \begin{itemize}
        \item Suppose $q_{N+1} \in [0, \frac{1}{4}\omega]$. Consider picking $x_0$ and $y_0$ such that $$d(\mathcal{R}(x_0), y_0) = \frac{1}{6}\omega.$$ This choice is possible because $d$ is surjective onto $[0,\omega]$, so for any value in this interval (such as $\omega/6$) there exist inputs $(x_0, y_0)$ attaining it.
        Substituting this choice yields
        \begin{equation}
            0< d(\mathcal{R}(x_1), y_1) < \frac{\frac{1}{4}\omega+\frac{1}{6}\omega \times \frac{3}{2}}{\frac{1}{2}} = \omega.
            \label{eq:d-range-1}
        \end{equation}
        Since $d$ is a surjective function with a bounded continuous range $[0, \omega]$\footnote{In our experiments, we used absolute error or squared error for $d$. These errors are bounded for a given dataset.},~\Cref{eq:d-range-1} implies the existence of $x_1$ and $y_1$.
        \item Suppose $q_{N+1} \in [-\frac{1}{4}\omega, 0]$. Consider picking $x_0$ and $y_0$ such that $$d(\mathcal{R}(x_0), y_0) = \frac{1}{2}\omega,$$ we have 
        \begin{equation}
            0< d(\mathcal{R}(x_1), y_1) < \frac{-\frac{1}{4}\omega+\frac{1}{2}\omega \times \frac{3}{2}}{\frac{1}{2}} = \omega,
            \label{eq:d-range-2}
        \end{equation}
        which similarly implies the existence of $x_1$ and $y_1$.
    \end{itemize}
    
\end{proof}

\begin{proof}[Proof for~\Cref{thm:dual}]
We rewrite the test statistic $\mathcal{T}$ in terms of the coupling plan $\pi$,
\begin{equation}
    \mathcal{T} = \left\{
     \begin{aligned}
      \mathop{inf}_\pi \textit{ } &\mathbb{E}_\pi[c((X,A,Y), (X',A',Y'))^2] \\
      s.t. \textit{ } & \mathbb{E}_\pi[\phi(X,A,Y)] = 0\\
      & \pi(A = a) = \hat{p}_a^N, \forall a \in \mathcal{A}\\
      & \mathbb{E}_\pi[\mathbbm{1}_{(\hat{x}_1, \hat{a}_i, \hat{y}_i)}(X',A',Y')] = 1/N, \forall i \in [N]
      \end{aligned}
    \right..
    \label{eq:coupling-objective}
\end{equation}
Because of the absolute trust on sensitive attribute $a$ in~\Cref{eq:cost-function}, any coupling $\pi$ with finite Wasserstein distance should satisfy $\pi(A = a) = \hat{p}_a^N$.~\Cref{eq:coupling-objective} can be further simplified to,
\begin{equation}
    \mathcal{T} = \left\{
     \begin{aligned}
      \mathop{inf}_\pi \textit{ } &\mathbb{E}_\pi[c((X,A,Y), (X',A',Y'))^2] \\
      s.t. \textit{ } & \mathbb{E}_\pi[\phi(X,A,Y)] = 0\\
      & \mathbb{E}_\pi[\mathbbm{1}_{(\hat{x}_1, \hat{a}_i, \hat{y}_i)}(X',A',Y')] = 1/N, \forall i \in [N]
      \end{aligned}
    \right..
\end{equation}
Using the notations from Lemma~\ref{lemma:slater}, this optimization problem can be written as
\begin{equation*}
    \mathcal{T} = \mathop{inf}_\pi \{\mathbb{E}_\pi[c(\xi, \xi')^2]: \pi \in \mathcal{M}_+(\Xi \times \hat{\Xi}_N), \mathbb{E}_\pi[\phi(\xi, \xi')] = \bar{q}\}.
\end{equation*}
As Lemma~\ref{lemma:slater} has verified $\bar{q}$ is an interior point of  $\{\mathbb{E}_\pi[f(\xi, \xi')]: \pi \in \mathcal{M}_+(\Xi \times \hat{\Xi}_N)\}$, by the strong duality theorem~\citep{smith1995generalized}, 
    \begin{equation}
    \begin{aligned}
        \mathcal{T} & = \left\{
     \begin{aligned}
      \mathop{sup}\textit{ } &\frac{1}{N}\sum_{i=1}^N b_i\\
      s.t. \textit{ } & b \in \mathbb{R}^N, \gamma \in \mathbb{R}\\
      & \sum_{i=1}^N b_i \mathbbm{1}_{(\hat{x}_i, \hat{a}_i, \hat{y}_i)}(x', a', y') - \gamma \phi(x,a,y) \leq c((x,a,y), (x', a', y'))^2, \\
      & \forall (x,a,y), (x', a', y') \in \mathcal{X} \times \mathcal{A} \times {Y}
      \end{aligned}
    \right. \\
    & = \left\{
     \begin{aligned}
      \mathop{sup}\textit{ } &\frac{1}{N}\sum_{i=1}^N b_i\\
      s.t. \textit{ } & b \in \mathbb{R}^N, \gamma \in \mathbb{R}\\
      &  b_i - \gamma \phi(x,a,y) \leq c((x, a, y), (\hat{x}_i, \hat{a}_i, \hat{y}_i))^2, \\
      & \forall (x,a,y) \in \mathcal{X} \times \mathcal{A} \times {Y}, \forall i \in [N]
      \end{aligned}
    \right. \\
    & = \frac{1}{N}\mathop{sup}\limits_{\gamma}\sum_{i=1}^N \mathop{inf}
_{x \in \mathcal{X}, y \in \mathcal{Y}} \{(\alpha \|x-\hat{x}_i\|+\beta |y-\hat{y}_i|)^2 + \gamma \phi(x_i, \hat{a}_i, y_i)\}.
\end{aligned}
\label{eq:dual}
\end{equation}
\end{proof}

\begin{proof}[Proof for~\Cref{cor:dual}]
    Since the fairness criterion (equal mean) does not involve the ground-truth labels, $\beta$ in the cost function should be set to zero. Thus,~\Cref{eq:dual} can be simplified to 
    \begin{equation*}
        \mathcal{T} = \frac{1}{N}\mathop{sup}\limits_{\gamma}\sum_{i=1}^N \mathop{inf}
_{x_i \in \mathcal{X}} \{\|x_i-\hat{x}_i\|^2 + \gamma \phi(x_i, \hat{a}_i, y_i)\}.
    \end{equation*}

Recall that $\lambda(a) = (\hat{p}_1^N)^{-1}\mathbbm{1}_1(a) - (\hat{p}_0^N)^{-1}\mathbbm{1}_0(a),$ and for the equal mean criterion, the discrepancy function reduces to $d(y, \hat{y}) = \hat{y} = \mathcal{R}(x) = \rho x + \sigma$, so that
    \begin{align*}
        \phi(x,\hat{a},y) = \lambda(\hat{a})d(y, \hat{y}) = \lambda(\hat{a})\mathcal{R}(x).
    \end{align*}
Defining $\omega_i = \gamma \lambda(\hat{a}_i)$, we consider each inner optimization problem:
$$\mathop{inf}\limits_{x_i \in \mathcal{X}}\{\|x_i - \hat{x}_i\|^2 + \gamma \lambda(\hat{a}_i) R(x_i)\}.$$

For any $x_i$, we decompose it as $x_i = \hat{x}_i-k_i\omega_i\rho - k_i'\rho^{\perp}$, where $k_i, k_i' \in \mathbb{R}$, $\rho^\perp \neq 0$ and $\rho^{T}\rho^{\perp} = 0.$ Then, the objective becomes: 
    \begin{equation}
        \begin{aligned}
        \mathcal{T}_{inf} &= \mathop{inf}\limits_{x_i \in \mathcal{X}}\{\|x_i - \hat{x}_i\|^2 + \gamma \lambda(\hat{a}_i) R(x_i)\} \\
        & = \mathop{inf}\limits_{k_i, k_i' \in \mathbb{R}, \rho^{\perp} \textit{ with }{\rho^T \rho^\perp = 0}} \{\|k_i\omega_i\rho + k_i'\rho^{\perp}\|^2 + \omega_i \rho^{T}(\hat{x}_i - k_i\omega_i\rho - k'_i\rho^{\perp}) + \omega_i \sigma\} \\
        & = \mathop{inf}\limits_{k_i, k_i' \in \mathbb{R}, \rho^{\perp} \textit{ with }{\rho^T \rho^\perp = 0}} \{ \|k_i\omega_i\rho + k_i'\rho^\perp\|^2 + \omega_i \rho^T \hat{x}_i - k_i \omega^2\|\rho\|^2 + \omega_i\sigma\}. \\
        \end{aligned}
        \label{eq:omega}
    \end{equation}
    We observe~\Cref{eq:omega} achieves the infimum when $k_i'\rho^\perp = 0$ (since $\rho^\perp$ is orthogonal to $\rho$). We thus focus solely on $k_i$, and~\Cref{eq:omega} can be further simplified to 
    \begin{equation}
    \begin{aligned}
        \mathcal{T}_{inf} &= \mathop{inf}\limits_{k_i \in \mathbb{R}}\{\|\rho\|^2\omega_i^2k_i^2-\omega_i^2\|\rho\|^2k_i + \omega_i \rho^T \hat{x}_i + \omega_i\sigma\} \\
        & = -\frac{1}{4}\omega_i^2\|\rho\|^2 + \omega_i\rho^T\hat{x}_i + \omega_i\sigma,
    \end{aligned}
    \end{equation}
    where the infimum is achieved at $k_i = \frac{1}{2}.$
    Hence, we have simplified the infimum part of the saddle point problem in~\Cref{eq:dual}, then
    \begin{equation*}
        \begin{aligned}
            \mathcal{T} &= \frac{1}{N}\mathop{sup}\limits_{\gamma\in \mathbb{R}} \{\sum_{i = 1}^N - \frac{1}{4}\gamma^2\lambda(\hat{a}_i)^2\|\rho\|^2 + \gamma \lambda(\hat{a}_i)\rho^T \hat{x}_i + \gamma\lambda(\hat{a}_i)\sigma \} \\
            & = \frac{1}{N}\mathop{sup}\limits_{\gamma\in \mathbb{R}}\{ - \frac{1}{4}\|\rho\|^2(\sum_{i=1}^N \lambda(\hat{a}_i)^2) \gamma^2 + (\rho^T \sum_{i=1}^N \lambda(\hat{a}_i)\hat{x}_i + \sigma \sum_{i=1}^N \lambda(\hat{a}_i)) \gamma\} \\
            & = \frac{(\rho^T \sum_{i=1}^N \lambda(\hat{a}_i)\hat{x}_i + \sigma \sum_{i=1}^N \lambda(\hat{a}_i))^2}{N \|\rho\|^2(\sum_{i=1}^N \lambda(\hat{a}_i)^2)} \\
            & = \frac{(\sum_{i=1}^N \lambda(\hat{a}_i)(\rho^T \hat{x}_i + \sigma))^2}{N \|\rho\|^2(\sum_{i=1}^N \lambda(\hat{a}_i)^2)},
            \end{aligned}
    \end{equation*}
    as the supremum is achieved at $$\gamma^* = \frac{2\rho^T \sum_{i=1}^N \lambda(\hat{a}_i) \hat{x}_i + 2\sigma \sum_{i=1}^N \lambda(\hat{a}_i)}{\|\rho\|^2 \sum_{i=1}^N \lambda(\hat{a}_i)^2}.$$
\end{proof}

We next prove~\Cref{thm:limiting} before~\Cref{thm:asymptotic_upper_bound} because the proof for~\Cref{thm:asymptotic_upper_bound} relies on~\Cref{thm:limiting}.
\begin{proof}[Proof for~\Cref{thm:limiting}] 
We use Lemma 4 from ~\citep{blanchet2019robust} (also see~\Cref{sec:RWPI}) to prove the theorem. First, we verify the three assumptions from Lemma 4.

Since $\phi(X,A,Y)$ is bounded, $\mathbb{E}\|\phi(X,A,Y)\|^2 < \infty$, Assumption A2' (see~\Cref{sec:RWPI} in verified. 

Next, note that 
\begin{equation*}
   P\big(\|\gamma \triangledown_X d(Y, \mathcal{R}(X) \big(\frac{\mathbbm{1}_1(A)}{\hat{p}_1^N} - \frac{\mathbbm{1}_0(A)}{\hat{p}_0^N} \big)\| = 0) = P\big(\frac{\mathbbm{1}_1(A)}{\hat{p}_1^N} = \frac{\mathbbm{1}_0(A)}{\hat{p}_0^N}\big).
\end{equation*}
Since $A$ can take either values 0 and 1, and the number of elements in each demographic group is non-zero, we have $$P\big(\frac{\mathbbm{1}_1(A)}{\hat{p}_1^N} = \frac{\mathbbm{1}_0(A)}{\hat{p}_0^N}\big) = 0.$$ Hence, $$P\big(\|\gamma \triangledown_X d(Y, \mathcal{R}(X) \big(\frac{\mathbbm{1}_1(A)}{\hat{p}_1^N} - \frac{\mathbbm{1}_0(A)}{\hat{p}_0^N} \big)\| \geq 0) = 1 > 0,$$ and Assumption A4' (see~\Cref{sec:RWPI}) is verified. 

Under the local Lipschitz continuity assumption, there exists $\kappa: \mathcal{X} \times \mathcal{Y} \rightarrow [0, \infty]$ such that,  $$\frac{\|\triangledown_X \mathcal{R}((x+\triangle), y) - \triangledown_X \mathcal{R}(x, y)\|}{\|\triangle\|} \leq \kappa(x,y).$$ Therefore,
\begin{align*}
  \|(\triangledown_X \mathcal{R}((x+\triangle), y) - \triangledown_X(\mathcal{R}(x), y)) \times (\frac{\mathbbm{1}_1(A)}{\hat{p}_1^N} - \frac{\mathbbm{1}_0(A)}{\hat{p}_p^N})\|  \leq \big|\frac{\mathbbm{1}_1(A)}{\hat{p}_1^N} - \frac{\mathbbm{1}_0(A)}{\hat{p}_0^N}\big| \kappa(x, y) \|\triangle\|,
\end{align*}
so Assumption A6' (see~\Cref{sec:RWPI}) is verified.

Applying Lemma 4 from~\cite{blanchet2019robust}, we have
\begin{align*}
    N \times \mathcal{T} &\overset{d}{\rightarrow} \mathop{sup}\limits_{\gamma \in \mathbb{R}} \{\gamma \tilde{Z} - \frac{\gamma^2}{4} \mathbb{E}_\mathbb{P} [\|\triangledown_X\mathcal{R}(f(X), Y)(\frac{\mathbbm{1}_1(A)}{\hat{p}_1^N} - \frac{\mathbbm{1}_0(A)}{\hat{p}_o^N})\|^2]\} \\
    & = (\mathbb{E}_\mathbb{P}[\|\triangledown_X \mathcal{R}(f(X), Y)(\frac{\mathbbm{1}_1(A)}{\hat{p}_1^N} - \frac{\mathbbm{1}_0(A)}{\hat{p}_0^N})\|^2])^{-1}\tilde{Z}^2,
\end{align*}
where $\tilde{Z} \sim N(0, Cov(\phi(X,A,Y))).$

We now study the behavior of $\phi(X,A,Y))$ under the null hypothesis that $\mathbb{P} \in \mathcal{F}_\mathcal{R}$. Specifically, $$\frac{1}{p_0} \mathbb{E}_\mathbb{P}[d(Y, \mathcal{R}(X))\mathbbm{1}_0(A)] = \frac{1}{p_1} \mathbb{E}_\mathbb{P}[d(Y, \mathcal{R}(X))\mathbbm{1}_1(A)].$$

Let $H^N = \frac{1}{N}\sum_{i=1}^N \phi(\hat{x}_i, \hat{a}_i, \hat{y}_i)$ be a consistent estimator for $\phi(X,A,Y)$.
\begin{align*}
    H^N &= \frac{1}{N}\sum_{i=1}^N \phi(\hat{x}_i, \hat{a}_i, \hat{y}_i) \\
    &= \frac{1}{N} \sum_{i=1}^N d(y_i, \mathcal{R}(\hat{x}_i)) (\frac{\mathbbm{1}_1(\hat{a}_i)}{\hat{p}_1^N} - \frac{\mathbbm{1}_0(\hat{a}_i)}{\hat{p}_0^N})\\
    &=\frac{1}{\hat{p}_0^N \hat{p}_1^N} \frac{1}{N} \sum_{i=1}^N d(y_i, \mathcal{R}(\hat{x}_i)) (\hat{p}_0^N\mathbbm{1}_1(\hat{a}_i) - \hat{p}_1^N\mathbbm{1}_0(\hat{a}_i)) \\
    & = \frac{1}{\hat{p}_0^N \hat{p}_1^N} \big\{\frac{1}{N} \sum_{i=1}^N d(y_i, \mathcal{R}(\hat{x}_i)) (p_0\mathbbm{1}_1(\hat{a}_i) - p_1\mathbbm{1}_0(\hat{a}_i)) \\
    & +  \frac{1}{N} (\hat{p}_0^N-p_0) \sum_{i=1}^N d(y_i, \mathcal{R}(\hat{x}_i))\mathbbm{1}_1(\hat{a}_i) -  \frac{1}{N} (\hat{p}_1^N-p_1) \sum_{i=1}^N d(y_i, \mathcal{R}(\hat{x}_i)) \mathbbm{1}_0(\hat{a}_i)\big\}. \\
\end{align*}
By Slutsky's theorem, we have
\begin{align*}
    &\frac{1}{N} (\hat{p}_0^N-p_0) \sum_{i=1}^N d(y_i, \mathcal{R}(\hat{x}_i)) \mathbbm{1}_1(\hat{a}_i)\overset{d.}{\rightarrow} (\hat{p}_0^N-p_0) \mathbb{E}_\mathbb{P}[d(y_i, \mathcal{R}(\hat{x}_i)) \mathbbm{1}_1(\hat{a}_i)], \\
    &\frac{1}{N} (\hat{p}_1^N-p_1) \sum_{i=1}^N d(y_i, \mathcal{R}(\hat{x}_i)) \mathbbm{1}_0(\hat{a}_i)\overset{d.}{\rightarrow} (\hat{p}_1^N-p_1) \mathbb{E}_\mathbb{P}[d(y_i, \mathcal{R}(\hat{x}_i)) \mathbbm{1}_0(\hat{a}_i)].
\end{align*}
Under the null hypothesis $\mathbb{P} \in \mathcal{F}_\mathcal{R}$, we have
\begin{align*}
    H^N & = \frac{1}{\hat{p}_0^N \hat{p}_1^N} \big\{\frac{1}{N} \sum_{i=1}^N d(y_i, \mathcal{R}(\hat{x}_i)) (p_0\mathbbm{1}_1(\hat{a}_i) - p_1\mathbbm{1}_0(\hat{a}_i)) \\
    & + (\hat{p}_0^N-p_0) \mathbb{E}_\mathbb{P}[d(y_i, \mathcal{R}(\hat{x}_i)) \mathbbm{1}_1(\hat{a}_i)] - (\hat{p}_1^N-p_1) \mathbb{E}_\mathbb{P}[d(y_i, \mathcal{R}(\hat{x}_i)) \mathbbm{1}_0(\hat{a}_i)]\big\} \\
    & = \frac{1}{\hat{p}_0^N \hat{p}_1^N} \big\{\frac{1}{N} \sum_{i=1}^N d(y_i, \mathcal{R}(\hat{x}_i)) (p_0\mathbbm{1}_1(\hat{a}_i) - p_1\mathbbm{1}_0(\hat{a}_i)) \\
    & + \hat{p}_0^N \mathbb{E}_\mathbb{P}[d(\hat{y}_i, \mathcal{R}(\hat{x}_i)) \mathbbm{1}_1(\hat{a}_i)] - \hat{p}_1^N \mathbb{E}_\mathbb{P}[d(y_i, \mathcal{R}(\hat{x}_i)) \mathbbm{1}_0(\hat{a}_i)]\big\} \\
    & = \frac{1}{\hat{p}_0^N \hat{p}_1^N} \big\{\frac{1}{N} \sum_{i=1}^N d(y_i, \mathcal{R}(\hat{x}_i)) (p_0\mathbbm{1}_1(\hat{a}_i) - p_1\mathbbm{1}_0(\hat{a}_i)) \\
    & + \frac{1}{N} \sum_{i=1}^N \mathbbm{1}_0(\hat{a}_i) \mathbb{E}_\mathbb{P}[d(\hat{y}_i, \mathcal{R}(\hat{x}_i)) \mathbbm{1}_1(\hat{a}_i)] - \frac{1}{N} \sum_{i=1}^N \mathbbm{1}_1(\hat{a}_i) \mathbb{E}_\mathbb{P}[d(y_i, \mathcal{R}(\hat{x}_i)) \mathbbm{1}_0(\hat{a}_i)]\big\} \\
    & = \frac{1}{\hat{p}_0^N \hat{p}_1^N} \big\{\frac{1}{N} \sum_{i=1}^N d(y_i, \mathcal{R}(\hat{x}_i)) (p_0\mathbbm{1}_1(\hat{a}_i) - p_1\mathbbm{1}_0(\hat{a}_i)) \\
    & + \mathbbm{1}_0(\hat{a}_i) \mathbb{E}_\mathbb{P}[d(\hat{y}_i, \mathcal{R}(\hat{x}_i)) \mathbbm{1}_1(\hat{a}_i)] - \mathbbm{1}_1(\hat{a}_i) \mathbb{E}_\mathbb{P}[d(\hat{y}_i, \mathcal{R}(\hat{x}_i)) \mathbbm{1}_0(\hat{a}_i)] \big\}.\\
    & \overset{d.}{\rightarrow} \tilde{Z}
\end{align*}
By central limit theorem,
$\tilde{Z} \sim N(0, Cov(Z'))$ where 
\begin{align*}
   Z' & = \frac{1}{p_0 p_1} \{d(Y, \mathcal{R}(X))(p_0\mathbbm{1}_1(A) - p_1\mathbbm{1}_0(A)) \\
   & + \mathbbm{1}_0(A) \mathbb{E}_\mathbb{P}[d(Y, \mathcal{R}(X)) \mathbbm{1}_1(A)] - \mathbbm{1}_1(A) \mathbb{E}_\mathbb{P}[d(Y, \mathcal{R}(X)) \mathbbm{1}_0(A)]\}. 
\end{align*}
\end{proof}

\begin{proof}[Proof for~\Cref{thm:asymptotic_upper_bound}] 
In this proof, to differentiate between the test statistics from exact expectation equivalence and expectation equivalence within a tolerance, we use $\mathcal{T}$ to denote the test statistic from exact expectation equivalence, and $\mathcal{T}^{tol}$ to denote the one from expectation equivalence within a tolerance. 

In~\Cref{thm:limiting},  we have shown that 
$$N \times \mathcal{T} \overset{d}{\rightarrow} \theta \chi_1^2,$$ where $\mathcal{T} = \mathop{inf}\limits_{\mathbb{Q} \in \mathcal{F}_\mathcal{R}} W_c^2(\mathbb{P}, \mathbb{Q}),$ and $\mathcal{F}_\mathcal{R}$ is the set of distributions satisfying exact expectation equivalence. 

By definition of the limiting distribution, this implies $$\overline{\mathop{lim}}_{n \rightarrow \infty} \mathbb{E}[f(N \times \mathcal{T})] \leq \mathbb{E}[f(\theta \chi_1^2)]$$ for every continuous and bounded non-decreasing function $f$.

Let $\mathcal{F}_{\mathcal{R}^{tol}}$ be the set of distributions satisfying expectation equivalence within tolerance. By definition, we have $\mathcal{F}_{\mathcal{R}^{tol}} \subseteq \mathcal{F}_\mathcal{R}.$ Then,  $$\mathcal{T}^{tol} = \mathop{inf}\limits_{\mathbb{Q} \in \mathcal{F}_\mathcal{R'}} W_c^2(\mathbb{P}, \mathbb{Q}) \leq \mathcal{T}.$$

Hence, $$\overline{\mathop{lim}}_{n \rightarrow \infty} \mathbb{E}[f(N \times \mathcal{T'})] \leq \overline{\mathop{lim}}_{n \rightarrow \infty} \mathbb{E}[f(N \times \mathcal{T})] \leq \mathbb{E}[f(\theta \chi_1^2)],$$

which  matches the definition of the asymptotic upper bound presented in the theorem.
\end{proof}

\begin{proof}[Proof for~\Cref{thm:optimal-perturbation}]
\label{proof:optimal-perturbation}
Let $\sum_{i=1}^N\triangledown(R(x_i)))$ be the original fairness metric value from all the data, and $\sum_{i=1}^N\Delta(R(x_i^\eta)$ be the perturbed value from all the data.

For any optimal solution $(x^\eta,y^\eta)$ from the inner optimization problem, the KKT conditions require:
\[\forall i, \phi(x_i^\eta,\hat{a},y_i^\eta) + \eta\delta_i = 0,\] which implies
\begin{align*}
 \forall i, \phi(x_i^\eta,\hat{a},y_i^\eta) &= -\eta\delta_i,   \\
 \sum_{i=1}^N \phi(x_i^\eta,\hat{a},y_i^\eta) &= -\eta \sum_{i=1}^N \delta_i.  
\end{align*}

Thus, the perturbed data improves the fairness violation by a multiplicative factor of $\eta$, as claimed.
\end{proof}

\begin{proof}[Proof for~\Cref{cor:optimal-perturbation-special}]
From the proof for~\Cref{cor:dual}, we know that the optimal $x_i^*$ for the inner optimiazation problem is $$x_i^* = \hat{x}_i - \frac{1}{2}\gamma^*\rho^T \lambda(\hat{a}_i).$$

Since $x_i^*$ are the data perturbation with perfectly fair predictions, and by the linearlity of linear regression and equal mean criterion in the special case, the Wasserstein interporlation becomes linear too. Hence, if we want to reduce the unfairness by a degree of $\eta$, $$x_i^\eta = \hat{x}_i - \frac{1}{2}\eta \gamma^*\rho^T \lambda(\hat{a}_i).$$

Since there is no label involved in the equal mean criterion, $y_i^\eta$ can be set to any value, for example, $y_i^\eta = \hat{y}_i$.
\end{proof}

\section{Validation of the limiting distribution}
\label{sec:validation-limiting-distribution}
We consider a regression setting where
\begin{equation}
\begin{aligned}
    X &\sim \mathcal{N}(0,1), \\
    \hat{Y} &= 3X+1, \\
    p_0 &= 0.6, p_1 = 0.4.
\end{aligned}
\label{eq:sim2}
\end{equation}
In~\Cref{fig:empirical-limiting}, we visualize the limiting distribution computed using~\Cref{cor:special-case-limiting-distribution} and the Wasserstein projection distance computed using~\Cref{cor:dual} for $N=1000$ in \textbf{A} and $N=10000$ in \textbf{B}. We observe that as $N$ increases, the empirical distribution of $N\times \mathcal{T}$ converges to $\theta \chi_1^2$, which confirms the correctness of our limiting distribution. 

\begin{figure}[!h]
    \centering
    \includegraphics[width=0.9\linewidth]{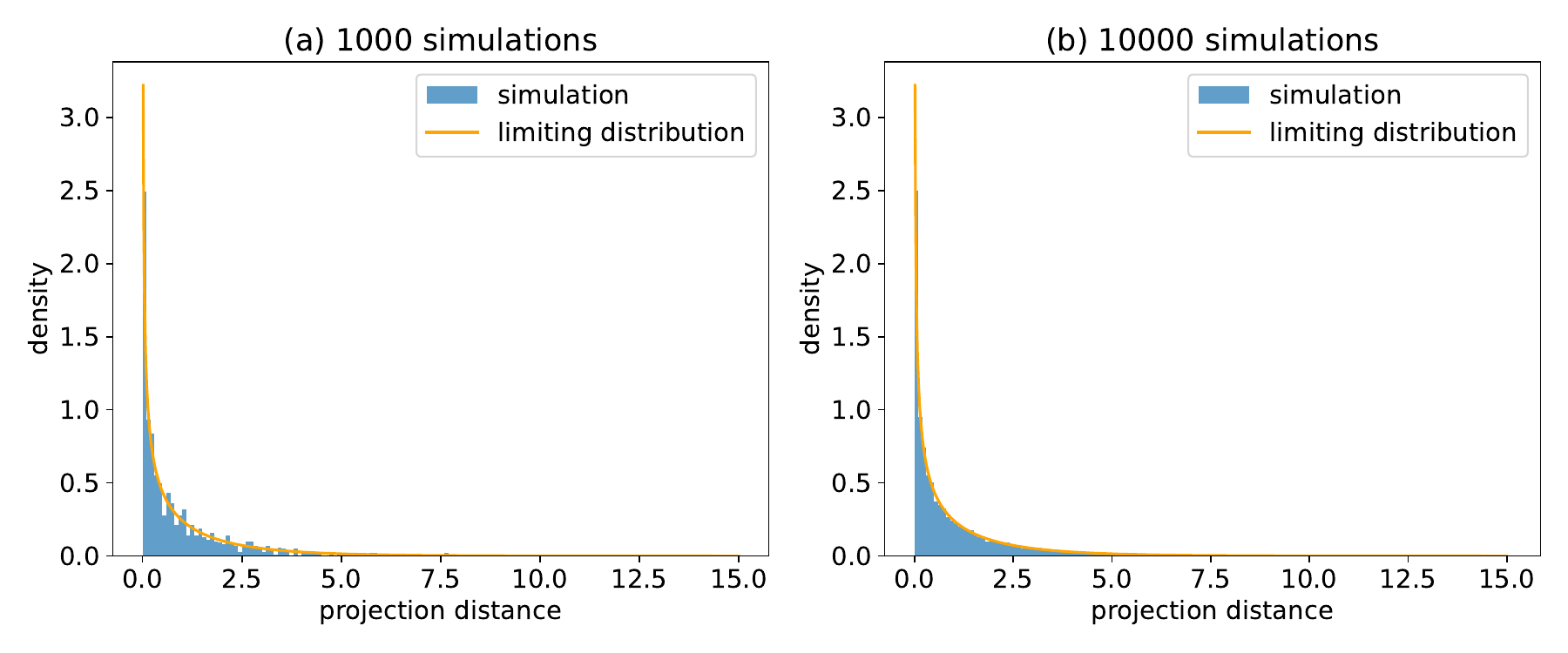}
    \caption{Empirical distribution of the Wasserstein projection distance versus limiting distribution for (a) 1000 simulations and (b) 10000 simulations. We observe that as the number of simulations increases, the empirical distribution converges to the limiting distribution.}
    \label{fig:empirical-limiting}
\end{figure}

\section{Validation of the optimal data perturbation theorem}
\label{sec:optimal-perturbation-special-case}
In this section, we validate the theoretical result from~\Cref{cor:optimal-perturbation-special}, which describes how to correct a linear regression model to improve fairness under the equal mean fairness criterion. Specifically, we simulate a dataset where the sensitive attribute $A \in \{0,1\}$ influences both the features $X$ and the outcome $Y$, and then apply the perturbation procedure described in~\Cref{cor:optimal-perturbation-special}.

We evaluate model fairness for equal mean fairness criterion using the difference in means of the corrected predictions, and model accuracy using the relative mean absolute error between the corrected predictions and the true targets. The perturbation strength is controlled by a parameter $\eta \in [0,1]$, where larger values induce stronger data perturbations and more fairness at a potential cost to accuracy.

\Cref{fig:optimal-perturbation-special-case} illustrates the trade-off between fairness and accuracy as $\eta$ increases. As expected, higher values of $\eta$ lead to reduced group mean differences, indicating improved fairness, while slightly increasing the relative MAE, representing a modest decrease in prediction accuracy. This behavior aligns with our theoretical result in~\Cref{cor:optimal-perturbation-special}, confirming that the data perturbation improves fairness while keeping the corrected empirical distribution close to the original empirical distribution.

\begin{figure}
    \centering
    \includegraphics[width=0.6\linewidth]{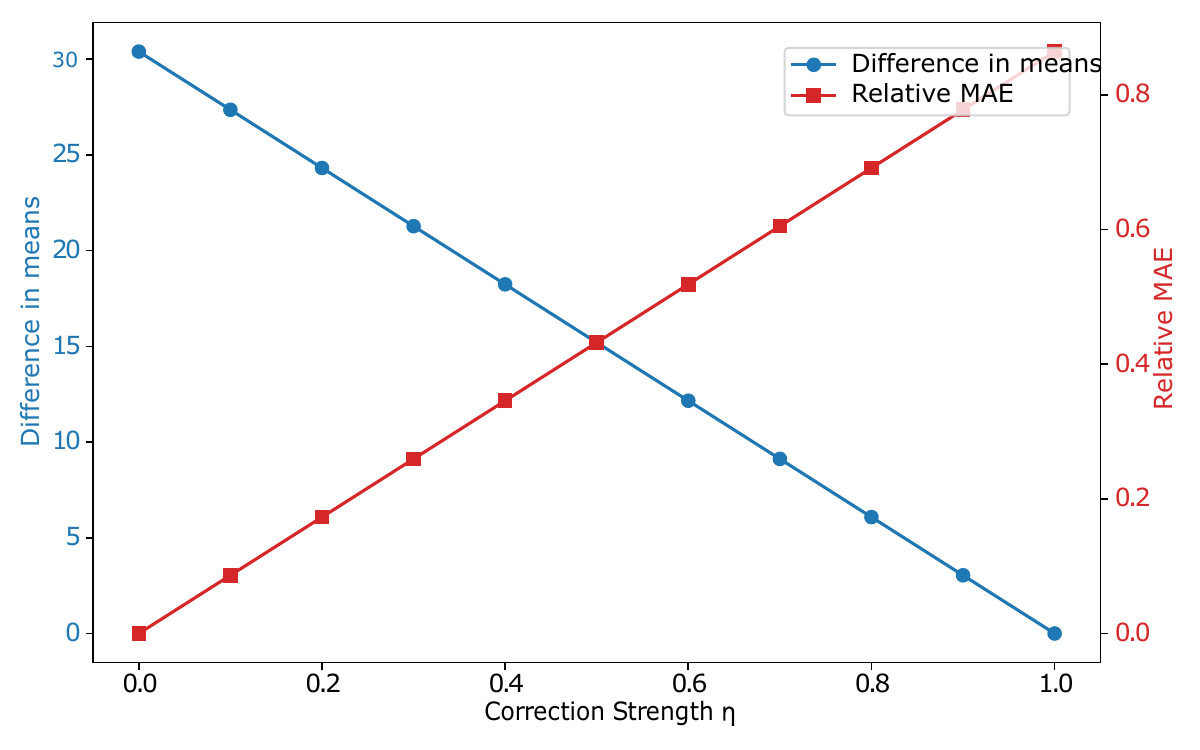}
    \caption{Scatterplot of difference in means and relative MAE scores versus correction strength $\eta$.}
    \label{fig:optimal-perturbation-special-case}
\end{figure}

\section{Simulation details}
\label{sec:simulation-details}

We conducted simulations to compare the Wasserstein projection-based test with the permutation test under the equal mean fairness criterion in a linear regression setting. Datasets were generated with two groups defined by a binary sensitive attribute $A \in \{0,1\}$. For each configuration, we varied one factor while fixing the others and repeated the experiment 100 times to estimate power and specificity.  

\paragraph{Sample size.} Total sample sizes ranged from 40 to 160, equally split between the two groups. Larger sample sizes reduce variance in test statistics, thereby improving power.  

\paragraph{Effect size.} Group differences were introduced by shifting the conditional means of outcomes in different groups, controlled by a parameter in $[0,1]$. Greater shifts correspond to stronger violations of fairness and yield higher power.  

\paragraph{Significance level.} Tests were evaluated at significance thresholds from 0.01 to 0.20. Higher thresholds increase the chance of rejecting the null, raising power but lowering specificity.  

Power was computed as the proportion of trials where the null was correctly rejected under unfair conditions, while specificity was computed as the proportion of trials where the null was correctly retained under fair conditions. These definitions align with conventional statistical testing and ensure comparability across methods.

\section{Other fairness criteria for regression}
\label{sec:other-criteria}
\paragraph{\textbf{Average ratio}} The ratio of independence, separation and sufficiency~\citep{steinberg2020fairness} can be expressed as,
$$r_{ind} = \frac{\mathop{Pr}(\mathcal{R}(X)|A = 1)}{\mathop{Pr}(\mathcal{R}(X)|A = 0)}, r_{sep} = \frac{\mathop{Pr}(\mathcal{R}(X)|A = 1, Y)}{\mathop{Pr}(\mathcal{R}(X)|A = 0, Y)}, r_{suf} = \frac{\mathop{Pr}(Y|A = 1, \mathcal{R}(X))}{\mathop{Pr}(Y|A = 0, \mathcal{R}(X))}. $$

Perfect independence, separation and sufficiency would yield a constant ratio of 1 for all $X$. Pragmatically it would be more useful to know these ratio in expectation known as the \emph{average ratio}. Hence, enforcing $$\mathop{\mathbb{E}}\limits_{X}[r_{ind}] = 1, \mathop{\mathbb{E}}\limits_{X}[r_{sep}] = 1 \textit{ and } \mathop{\mathbb{E}}\limits_{X}[r_{suf}] = 1$$ provides another relaxed way to enforce the fairness criteria.

\section{Robust Wasserstein profile inference and its limit theorem}
\label{sec:RWPI}
\emph{Robust Wasserstein Profile Inference}~\citep{blanchet2019robust} is a methodology which extends the use of methods inspired by empirical likelihood to the setting of optimal transport costs (of which Wasserstein distances are a particular case). This paper derives general limit theorems for the asymptotic distribution of the \emph{Robust Wasserstein Profile} (RWP) function defined for general estimating equations. We set up our notations to introduce one limit theorem presented in the paper.

Suppose $X$ and $Y$ are random variables, $h$ is an integrable function, $\mathbb{P}$ is the data generating distribution for $X$ and $Y$, $\mathbb{P}_n$ is the empirical distribution of $X$ and $Y$, and $W_c$ is a Wasserstein distance with cost function being $c(w, u) = \|w - u\|_q^\rho$.  Assuming the RWP function for estimating $\theta_*$ satisfies $\mathbb{E}[h(W, \theta_*)] = 0,$ we define the RWP function as,
$$R_n(\theta_*; \rho) = \mathop{inf}\{W_c(\mathbb{P}, \mathbb{P}_n): \mathbb{E}_\mathbb{P} [h(X, Y; \theta_*)] = 0\}.$$

Lemma 4 shows that under the following assumptions:
\begin{itemize}
    \item A2': Suppose that $\theta_* \in \mathbb{R}^d$ satisfies 
\[
\mathbb{E}[h(X,Y;\theta_\ast)] = 0
\quad \text{and} \quad 
\mathbb{E}\|h(X,Y;\theta_\ast)\|_2^2 < \infty.
\]
While we do not assume that $\theta_\ast$ is unique, the results are stated for a fixed $\theta_\ast$ satisfying $\mathbb{E}[h(X,Y;\theta_\ast)] = 0$;
\item A4': Suppose that for each $\xi \neq 0$, the partial derivative $D_x h(x,y;\theta_\ast)$ satisfies
\[
\mathbb{P}\!\left(\, \|\xi^\top D_x h(X,Y;\theta_\ast)\|_p > 0 \,\right) > 0;
\]
\item A6': Assume that there exists $\bar{\kappa}:\mathbb{R}^m \to [0,\infty)$ such that
\[
\| D_x h(x+\Delta, y;\theta_\ast) - D_x h(x,y;\theta_\ast) \|_p 
\;\leq\; \bar{\kappa}(x,y)\,\|\Delta\|_q,
\quad \forall \Delta \in \mathbb{R}^d,
\]
and $\mathbb{E}[\bar{\kappa}(X,Y)^2] < \infty$.
\end{itemize}

Then we have, for $\rho \geq 2$, 
$$nR_n(\theta_*; \rho) \overset{d}{\rightarrow} \bar{R}(\rho),$$ where
$$\bar{R}(\rho) = \mathop{sup}\limits_{\xi}\{\rho \xi^T H - (\rho-1)\mathbb{E}\|\xi^T D_x h(X, Y; \theta_*)\|_p^{\rho/(\rho-1)}\},$$
with $H \sim \mathcal{N}(\mathbf{0}, \mathop{Cov}[h(X, Y;\theta_*)])$ and $1/p + 1/q = 1.$

\end{document}